\DeclareMathOperator*{\argmin}{arg\,min}
\theoremstyle{plain}
\newtheorem{theorem}{Theorem}[section]
\newtheorem{proposition}[theorem]{Proposition}
\theoremstyle{definition}
\title{Annealed Training for Combinatorial \\ Optimization on Graphs}
\author{%
   Haoran Sun \\
   Georgia Tech \\
   \texttt{hsun349@gatech.edu} \\
   \And
   Etash K. Guha \\
   Georgia Tech \\
   \texttt{etash@gatech.edu} \\
   \And
   Hanjun Dai \\
   Google Brain \\
   \texttt{hadai@google.com} \\
}
\begin{document}

\maketitle

\begin{abstract}
The hardness of combinatorial optimization (CO) problems hinders collecting solutions for supervised learning. However, learning neural networks for CO problems is notoriously difficult in lack of the labeled data as the training is easily trapped at local optima. In this work, we propose a simple but effective annealed training framework for CO problems. In particular, we transform CO problems into unbiased energy-based models (EBMs). We carefully selecting the penalties terms so as to make the EBMs as smooth as possible. Then we train graph neural networks to approximate the EBMs. To prevent the training from being stuck at local optima near the initialization, we introduce an annealed loss function.
An experimental evaluation demonstrates that our annealed training framework obtains substantial improvements. In four types of CO problems, our method achieves performance substantially better than other unsupervised neural methods on both synthetic and real-world graphs.
\end{abstract}

\section{Introduction}
Combinatorial Optimization (CO) problems occur whenever there is a requirement to select the best option from a finite set of alternatives. They arise in a wide range of application areas, including business, medicine, and engineering \citep{paschos2013applications}.
Many CO problems are NP-complete \citep{karp1972reducibility, garey1979computers}. 
Thus, excluding the use of exact algorithms to find the optimal solution \citep{padberg1991branch, wolsey1999integer}, different heuristic methods are employed to find suitable solutions in a reasonable amount of time \citep{nemhauser1978analysis, dorigo2006ant, hopfield1985neural, kirkpatrick1983optimization}.

Often, instances from the same combinatorial optimization problem family are solved repeatedly, giving rise to the opportunity for learning to improve the heuristic~\citep{bengio2020machine}. Recently, learning algorithms for CO problems has shown much promise, including supervised \citep{khalil2016learning, gasse2019exact, li2018combinatorial, selsam2018learning, nair2020solving}, unsupervised \citep{karalias2020erdos, toenshoff2021graph}, and reinforcement learning \citep{dai2017learning, sun2020improving, yolcu2019learning, chen2019learning} 
The success of supervised learning relies on labeled data. However, solving a hard problem could take several hours or even days and is computationally prohibitive \citep{yehuda2020s}. Reinforcement learning, suffering from its larger state space and lack of full differentiability, tends to be more challenging and more time-consuming to train.

Unsupervised learning usually transforms a CO problem into an optimization problem with a differentiable objective function $f$ where the minima represent discrete solutions \citep{hopfield1985neural, smith1999neural, karalias2020erdos}.
Although this framework allows for efficient learning on large, unlabeled datasets, it is not without challenges. 
In the absence of the labels, the objective function is typically highly non-convex \citep{mezard2009information}. During learning, especially for neural networks, the model's parameters can easily get trapped near a local optimum close to the initialization, never reaching the optimal set of parameters. Such a phenomenon makes unsupervised learning for CO problems extremely hard.

To address this challenge, we propose an annealed training framework. 
In detail, given a CO problem, we consider a tempered EBM $P_\tau \propto e^{-f(\mathbf{x})/\tau}$, where the energy function $f$ 
unifies constrained or unconstrained CO problems via the big-M method, that's to say, adding large penalties for violated constraints. We derive the minimum values of the penalty coefficient in different CO problems that gives us the smoothest, unbiased energy-based models. We train a graph neural network (GNN) that predicts a variational distribution $Q_\phi$ to approximate the energy-based model $P_\tau$. During training, we set a high initial temperature $\tau$ and decrease it gradually during the training process. When $\tau$ is large, $P_\tau$ is close to a uniform distribution and only has shallow local optima, such that the parameter $\theta$ can traverse to distant regions. When $\tau$ decreases to values small enough, the unbiased model $P_\tau$ will concentrate on the optimal solutions to the original CO problem.

The experiments are evaluated on four NP-hard graph CO problems: maximum independent set, maximum clique, minimum dominate set, and minimum cut. On both synthetic and real-world graphs, our annealed training framework achieves excellent performance compared to other unsupervised neural methods \citep{toenshoff2021graph, karalias2020erdos}, classical algorithms \citep{aarts2003local, bilbro1988optimization}, and integer solvers \citep{gurobi19gurobi}. The ablation study demonstrates the importance of selecting proper penalty coefficients and cooling schedules.

In summary, our work has the following contributions:
\begin{itemize}[leftmargin=*,nosep,nolistsep]
\item We propose an annealed learning framework for generic unsupervised learning on combinatorial optimization problems. It is simple to implement, yet effective to improve the unsupervised learning across various problems on both synthetic and real graphs.
\item We conducted ablation studies that show: 1) annealed training enables the parameters to escape from local optima and traverse a longer distance, 2) selecting proper penalty coefficients is important, 3) Using initial temperature large enough is important.
\end{itemize}

\section{Annealed Training for Combinatorial Optimization}
We want to learn a graph neural network $G_\theta$ to solve combinatorial optimization problems. Given an instance $I$, the $G_\theta$ generates a feature $\phi = G_\theta(I)$ that determines a variational distribution $Q_\phi$, from which we decode solutions.
This section presents our annealed training framework for training $G_\theta$. We first show how to represent CO problems via an energy-based model. Then, we define the annealed loss function and explain how it helps in training. Finally, we give a toy example to help the understanding.

\subsection{Energy Based Model}
We denote the set of combinatorial optimization problems as $\mathcal{I}$. An instance $I \in \mathcal{I}$ is
\begin{equation}
    I = \left(c(\cdot), \{\psi_i\}_{i=1}^m\right) := \argmin_{\mathbf{x}\in \{0, 1\}^n} c(\mathbf{x}) \quad \text{s.t. } \psi_i(\mathbf{x}) = 0, \quad i=1, ..., m \label{eq:cons}
\end{equation}
where $c(\cdot)$ is the objective function we want to minimize and $\psi_i \in \{0, 1\}$ indicates whether the i-th constraint is satisfied or not.
We first rewrite the constrained problem into an equivalent unconstrained form via big M method: 
\begin{equation}
    \label{eq:energy}
    \argmin_{\mathbf{x} \in \{0, 1\}^n} f^{(I)}(\mathbf{x}) := c(\mathbf{x}) + \sum_{i=1}^m \beta_i \psi_i(\mathbf{x}), \quad \beta_i \ge 0
\end{equation}
If $f^{(I)}$ has its smallest values on optimal solutions for \eqref{eq:cons}, we refer it to unbiased. The selection of penalty coefficient $\beta$ plays an important role in the success of training and we will discuss our choice of $\beta$ detailedly in section \ref{sec:cases}.
Using unbiased $f^{(I)}$ to measure the fitness of a solution $x$, we can define the unbiased energy-based models (EBMs):
\begin{equation}
    \label{eq:model}
    P^{(I)}_\tau(\mathbf{x}) \propto e^{-f^{(I)}(\mathbf{x})/\tau}
\end{equation}
The EBMs naturally introduce a temperature $\tau$ to control the smoothness of the distribution. When $f$ is unbiased, it has the following property:
\begin{proposition}
\label{prop:temp}
Assume $f$ is unbiased, that's to say, all minimizers of \eqref{eq:energy} are feasible solutions for \eqref{eq:cons}. When the temperature $\tau$ increases to infinity, the energy based model $P_\tau$ converges to a uniform distribution over the whole state space $\{0, 1\}^n$. When the temperature $\tau$ decreases to zero, the energy based model $P_\tau$ converges to a uniform distribution over the optimal solutions for \eqref{eq:cons}.
\end{proposition}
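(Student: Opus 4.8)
The plan is to argue directly from the Gibbs form of the model. Write
\[
P_\tau(\mathbf{x}) \;=\; \frac{e^{-f(\mathbf{x})/\tau}}{Z_\tau}, \qquad Z_\tau \;=\; \sum_{\mathbf{y}\in\{0,1\}^n} e^{-f(\mathbf{y})/\tau}.
\]
Since $\{0,1\}^n$ is finite, a sequence of probability distributions on it converges (in total variation, equivalently weakly) if and only if the mass assigned to each fixed state converges; hence it suffices to evaluate $\lim_\tau P_\tau(\mathbf{x})$ for every fixed $\mathbf{x}$ in each of the two regimes, and a finite sum may always be exchanged with the limit. So the whole proof reduces to two elementary limit computations plus one bookkeeping step that uses the unbiasedness hypothesis.

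For the high-temperature limit, fix $\mathbf{x}$. Because $f(\mathbf{x})$ is a finite real number, $f(\mathbf{x})/\tau \to 0$ as $\tau \to \infty$, so $e^{-f(\mathbf{x})/\tau} \to 1$; summing over the $2^n$ states gives $Z_\tau \to 2^n$, and therefore $P_\tau(\mathbf{x}) \to 2^{-n}$ for every $\mathbf{x}$, which is exactly the uniform distribution on $\{0,1\}^n$.

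For the low-temperature limit, let $f^* := \min_{\mathbf{y}} f(\mathbf{y})$ and $\mathcal{M} := \{\mathbf{y}: f(\mathbf{y}) = f^*\}$, and multiply numerator and denominator of $P_\tau(\mathbf{x})$ by $e^{f^*/\tau}$ to get
\[
P_\tau(\mathbf{x}) \;=\; \frac{e^{-(f(\mathbf{x})-f^*)/\tau}}{\sum_{\mathbf{y}} e^{-(f(\mathbf{y})-f^*)/\tau}}.
\]
Every exponent $f(\mathbf{y})-f^*$ is nonnegative and vanishes exactly on $\mathcal{M}$, so as $\tau \to 0^+$ each term indexed by $\mathbf{y}\in\mathcal{M}$ stays equal to $1$ while each term with $\mathbf{y}\notin\mathcal{M}$ tends to $0$; the denominator tends to $|\mathcal{M}|$ and $P_\tau(\mathbf{x}) \to \mathbf{1}[\mathbf{x}\in\mathcal{M}]/|\mathcal{M}|$, the uniform distribution over the set of global minimizers of $f$. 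To conclude, I would invoke unbiasedness to identify $\mathcal{M}$ with the optimal set of \eqref{eq:cons}: any $\mathbf{x}^*\in\mathcal{M}$ is feasible, so $\psi_i(\mathbf{x}^*)=0$ for all $i$ and $f(\mathbf{x}^*)=c(\mathbf{x}^*)$; for every feasible $\mathbf{y}$ one has $c(\mathbf{y})=f(\mathbf{y})\ge f^* = c(\mathbf{x}^*)$, so $\mathbf{x}^*$ is optimal for \eqref{eq:cons}; conversely any optimal $\mathbf{z}$ satisfies $f(\mathbf{z})=c(\mathbf{z})=c(\mathbf{x}^*)=f^*$, so $\mathbf{z}\in\mathcal{M}$.

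The one place that deserves care — and the only genuine obstacle — is this last identification: the hypothesis as stated gives only the inclusion $\mathcal{M}\subseteq(\text{optimal set})$, so one must separately establish the reverse inclusion, and observe that $\mathcal{M}\neq\emptyset$ (the minimum of $f$ over a finite set is attained) forces the feasible set to be nonempty, which is precisely what makes ``uniform distribution over the optimal solutions'' a well-defined object. Everything else is the routine interchange of a finite sum with a limit.
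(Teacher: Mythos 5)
Your proof is correct, and in fact the paper never supplies a proof of this proposition at all --- the appendix only proves the penalty-coefficient propositions for the four concrete problems --- so there is no authorial argument to compare against. Your two limit computations are the standard Gibbs-distribution facts, and you are right to flag the one non-trivial step: the unbiasedness hypothesis literally gives only $\mathcal{M}\subseteq(\text{optimal set})$, and your short argument for the reverse inclusion (any optimal $\mathbf{z}$ is feasible, so $f(\mathbf{z})=c(\mathbf{z})\le c(\mathbf{x}^*)=f^*$, forcing $\mathbf{z}\in\mathcal{M}$) together with the observation that $\mathcal{M}\neq\emptyset$ guarantees feasibility is exactly what is needed to make the low-temperature claim well-posed; your write-up is complete as it stands.
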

The proposition above shows that the temperature $\tau$ in unbiased EBMs provides an interpolation between a flat uniform distribution and a sharp distribution concentrated on optimal solutions. This idea is the key to the success of simulated annealing \citep{kirkpatrick1983optimization} in inference tasks. We will show that the temperature also helps in learning.

\subsection{Tempered Loss and Parameterization}
We want to learn a graph neural network $G_\theta$ parameterized by $\theta$. Given an instance $I \in \mathcal{I}$, $G_\theta(I) = \phi$ generates a vector $\phi$ that determines a variational distribution $Q_\phi$ to approximate the target distribution $P_\tau^{(I)}$. We want to minimize the KL-divergence:
\begin{align}
    D_\text{KL}(Q_\phi||P_\tau^{(I)}) &= \int Q_\phi(\mathbf{x}) \Big(\log Q_\phi(\mathbf{x})
     - \log \frac{e^{-f^{(I)}(\mathbf{x})/\tau}}{\sum_{\mathbf{x}\in\{0, 1\}^n} e^{-f^{(I)}(\mathbf{x})/\tau}} \Big) d\mathbf{x} \\
    &=\ \frac{1}{\tau} \mathbb{E}_{\mathbf{x}\sim Q_\phi(\cdot)} [f^{(I)}(\mathbf{x})] - H(Q_\phi) 
  + \log \sum_{\mathbf{x}\in\{0, 1\}^n} e^{-f^{(I)}(\mathbf{x})/\tau}
    \label{eq:kl}
\end{align}
Remove the terms not involving $\phi$ and multiply the constant $\tau$, we define our annealed loss functions for $\phi$ and $\tau$ as:
\begin{align}
    L_\tau(\phi, I) &= \mathbb{E}_{\mathbf{x} \sim Q_{\phi} (\cdot)}[f^{(I)} (\mathbf{x})] - \tau H(Q_\phi) \label{eq:phi} \\ 
    L_\tau(\theta) &= \mathbb{E}_{I\sim \mathcal{I}}\left[\mathbb{E}_{\mathbf{x} \sim Q_{G_\theta(I)} (\cdot)}[f^{(I)} (\mathbf{x})] - \tau H(Q_{G_\theta(I)} )\right] \label{eq:theta}
\end{align}
In this work, we consider the variational distribution as a product distribution:
\begin{equation}
    Q_\phi(x) = \prod_{i=1}^n (1 - \phi_i)^{1 - x_i} \phi_i^{x_i} \label{eq:q}
\end{equation}
where $\phi \in [0, 1]^n$. Such a form is a popular choice in learning graphical neural networks for combinatorial optimization \citep{li2018combinatorial, dai2020framework, karalias2020erdos} for its simplicity and effectiveness.
However, direct applying it on unsupervised learning is challenging. Different from supervised learning, where the loss function cross entropy is convex for $\phi$, $L_\tau(\phi, I)$ in unsupervised learning could be highly non-convex , especially when $\tau$ is small. 

\subsection{Annealed Training}
To address the non-convexity in training, we employ an annealed training. In particular, we use a large initial temperature $\tau_0$ to smooth the loss function and reduce $\tau_t$ gradually to zero during training. From proposition \ref{prop:temp}, it can be seen as a curriculum learning \citep{bengio2009curriculum} along the interpolation path from the easier uniform distribution towards harder target distribution.

Why is it helpful? We need a more detailed investigation of the training procedure to answer this question. Since the loss function \eqref{eq:theta} is the expectation over the set of instances $\mathcal{I}$, we typically use a batch of instances $I_1, ..., I_B$ to calculate the empirical loss $\hat{L}_\tau(\theta)$ and perform stochastic gradient descent. It gives:
\begin{align}
    \nabla_\theta \hat{L}_\tau(\theta) 
    &= \sum_{i=1}^B \nabla_\theta L_\tau(G_\theta(I_i), I_i) \\
    &= \sum_{i=1}^B \frac{\partial G_\theta(I_i)}{\partial \theta} \nabla_\phi L_\tau(\phi, I_i)|_{\phi = G_\theta(I_i)} \\
    &= \mathbb{E}_{I\sim \mathcal{I}}\left[ \frac{\partial G_\theta(I)}{\partial \theta} \nabla_\phi L_\tau(\phi, I)|_{\phi = G_\theta(I)} \right] + \xi \label{eq:xi}\\
    &\approx \mathbb{E}_{I\sim \mathcal{I}}\left[ \frac{\partial G_\theta(I)}{\partial \theta} (\nabla_\phi L_\tau(\phi, I)|_{\phi = G_\theta(I)} + \zeta) \right] \label{eq:zeta}
\end{align}
In \eqref{eq:xi}, we assume the batch introduces a stochastic term $\xi$ in gradient w.r.t. $\theta$. In \eqref{eq:zeta}, we incorporate the stochastic term into the gradient with respect to $\phi$. When we assume $\zeta$ is a Gaussian noise, the inner term $g = \nabla_\phi L_\tau(\phi, I)|_{\phi = G_\theta(I)} + \zeta$ performs as a stochastic Langevin gradient with respect to $\phi$ \cite{welling2011bayesian}. Since the training data is sampled from a fixed distribution $I \sim \mathcal{I}$, the scale of the noise $\zeta$ is also fixed. When $L_\tau(\phi, i)$ is unsmooth, the randomness from $\zeta$ is negligible compared to the gradient $\nabla L_\tau(\phi, i)$ and can not bring $\phi$ out of local optima. By introducing the temperate $\tau$, we smooth the loss function and reduce the magnitude of $\nabla L_\tau(\phi, i)$. The annealed training performs an implicit simulated annealing \citep{kirkpatrick1983optimization} for $\phi$ during the training.

\subsection{A Toy Example}
To have a more intuitive understanding of the annealed training, we look at a toy example. Consider a maximum independent set problem on an undirected, unweighted graph $G=(V, E)$, the corresponding energy function $f(x)$ is: 
\begin{equation}
    f(x) = - \sum_{i=1}^{n} x_i + \sum_{(i, j) \in E} x_i x_j
\end{equation}
Its correctness can be justified by proposition \ref{prop:mis}. When we use the variational distribution $Q_\phi$ in \eqref{eq:q}, the first term in $L_\tau(\phi, I)$ becomes to:
\begin{equation}
    \mathbb{E}_{\mathbf{x} \sim Q_{\phi} (\cdot)}[f^{(I)} (\mathbf{x})] = - \sum_{i=1}^n \phi_i + \sum_{(i, j) \in E} \phi_i \phi_j
\end{equation}
and accordingly, the gradient w.r.t. $\phi$ is:
\begin{equation}
    g = - 1 + 2 \sum_{j \in N(i)} \phi_j + \tau (\log \phi_i  - \log (1 - \phi_i)) + \zeta
\end{equation}
where we assume $\zeta \sim \mathcal{N}(0, \sigma^2)$ for a very small $\sigma$. When the temperature $\tau = 0$, $\phi_i$ will collapse to either $0$ or $1$ very fast. When $\phi_i = 1$, we have $g = -1 + \zeta$, when $\phi_i = 0$, we have $g \ge 1 + \zeta$. Since $\sigma$ is small, the noise $\zeta$ can hardly has an effect and $\phi$ will be stuck at local optima, i.e. any maximal independent set such as figure.~\ref{fig:ex} (a). In figure.~\ref{fig:ex}, we simulate the input (a) at decreasing temperatures $\tau=1.0, 0.5, 0.1$.
When $\tau$ is large, all $\phi_i$ will be pushed to neutral state, e.g. in figure.~\ref{fig:ex} (b) where the difference of $\phi_i$ is at scale $10^{-3}$. In this case, the noise $\zeta$ can significantly affect the sign of the gradient $g$ and lead to phase transitions. By gradually decreasing the temperature, $\phi$ collapses to the global optimum and provides correct guidance to update $\theta$. 

\begin{figure}
    \centering
    \includegraphics[width=\textwidth]{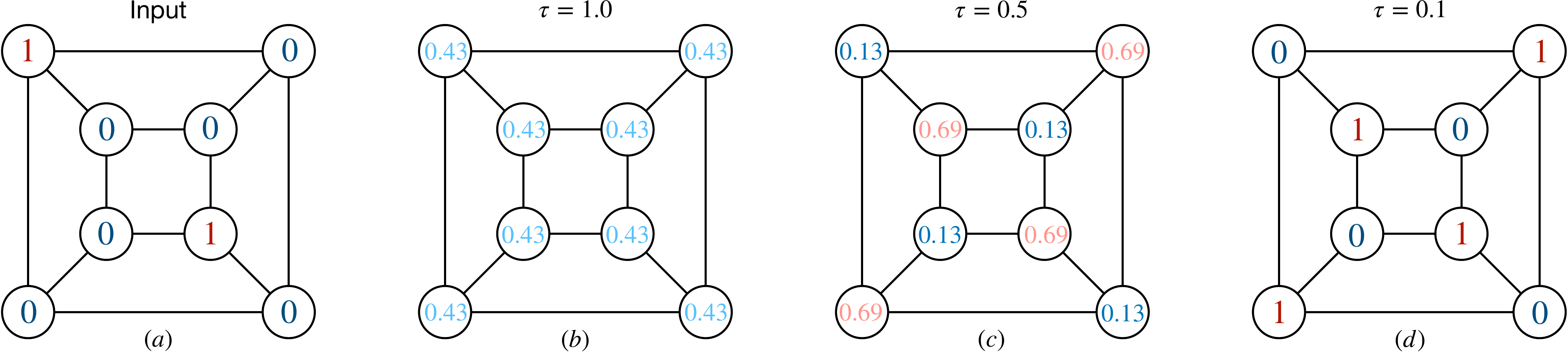}
    \caption{A toy example of maximum independent set}
    \label{fig:ex}
\end{figure}

\section{Case Study}
\label{sec:cases}
We consider four types of combinatorial optimization problems on graphs in this work: maximum independent set, maximum clique, minimum dominate set, and minimum cut. All problems can be represented by an undirected weighted graph $G=(V, E, w)$, where $V=\{1, ..., n\}$ is the set of nodes, $E$ is the set of edges, and $w$ is the weight function. For any $i \in V$, $w_i = w(i)$ is the weight of the node. For any $(i, j) \in E$, $w_{ij} = w(i, j)$ is the weight of the edge. For each problem, we derive the minimum value of the penalty coefficient $\beta$ such that the energy function has the lowest energy at optimal solutions, and we use the derived values to design the loss functions in our experiments.

\subsection{Maximum Independent Set}
An independent set is a subset of the vertices $S \subseteq V$, such that for arbitrary $i, j \in S$, $(i, j) \notin E$. The maximum independent set problem is to find an independent set $S$ having the largest weight. Rigorously, if we denote $x_i = 1$ to indicate $i \in S$ and $x_i = 0$ to indicate $i \notin S$, the problem can be formulated as:
\begin{equation}
    \argmin_{x \in \{0, 1\}^n} c(x) := - \sum_{i=1}^n w_i x_i, \quad \text{ subject to } x_i x_j = 0, \forall (i, j) \in E \label{eq:is}
\end{equation}
We define the corresponding energy function:
\begin{equation}
    f(x) := -\sum_{i=1}^n w_i x_i + \sum_{(i, j) \in E} \beta_{ij} x_i x_j \label{eq:f_is}
\end{equation}
\begin{proposition}
\label{prop:mis}
If $\beta_{ij} \ge \max\{w_i, w_j\}$ for all $(i, j) \in E$, then for any $x \in \{0, 1\}^n$, there exists a $x' \in \{0, 1\}^n$ that satisfies the constraints in \eqref{eq:is} and has lower energy: $f(x') \le f(x)$.
\end{proposition}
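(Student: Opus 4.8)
The plan is to prove the statement by a \emph{local-repair} argument: show that any single violated constraint of $x$ can be removed by flipping one of its endpoints from $1$ to $0$ without increasing $f$, and then iterate until feasibility is reached. Formally, I would induct on the number of violated edges $k(x) := |\{(i,j)\in E : x_i x_j = 1\}|$. The base case $k(x)=0$ is immediate: then $x$ itself satisfies the constraints of \eqref{eq:is}, so we take $x' = x$.

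\textbf{Inductive step.} Suppose $k(x) \ge 1$ and pick any edge $(i,j)\in E$ with $x_i = x_j = 1$. Let $\tilde x$ agree with $x$ in every coordinate except $\tilde x_i = 0$. The key computation is to bound $f(\tilde x) - f(x)$ using \eqref{eq:f_is}: the linear term $-\sum_\ell w_\ell x_\ell$ changes by exactly $+w_i$, and the quadratic term $\sum_{(\ell,m)\in E}\beta_{\ell m} x_\ell x_m$ changes by $-\sum_{\ell\in N(i)} \beta_{i\ell} x_\ell$, i.e. it drops by $\beta_{i\ell}$ for each neighbor $\ell$ of $i$ currently set to $1$. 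Since weights (and hence all $\beta_{i\ell}$) are nonnegative and $j\in N(i)$ is one such neighbor with $x_j=1$, the quadratic part decreases by at least $\beta_{ij}$. Therefore
\[
 f(\tilde x) - f(x) \;=\; w_i - \sum_{\ell\in N(i)} \beta_{i\ell} x_\ell \;\le\; w_i - \beta_{ij} \;\le\; w_i - \max\{w_i,w_j\} \;\le\; 0,
\]
so $f(\tilde x) \le f(x)$. Next, observe $k(\tilde x) < k(x)$: setting $\tilde x_i = 0$ cannot create a new violated edge, because a violated edge requires both endpoints equal to $1$, and it destroys at least the violation on $(i,j)$. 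Applying the induction hypothesis to $\tilde x$ produces a feasible $x'$ with $f(x') \le f(\tilde x) \le f(x)$, which completes the induction.

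\textbf{Where the care is needed.} The argument is essentially routine; there is no deep obstacle. The two points to get right are (i) the sign bookkeeping in the energy difference $f(\tilde x)-f(x)$ — in particular that the quadratic gain from removing a violated edge is at least $\beta_{ij}$ and thus dominates the linear loss $w_i$ precisely because $\beta_{ij}\ge\max\{w_i,w_j\}$ — and (ii) the termination/monotonicity: each repair strictly decreases $k(\cdot)$ and never introduces a fresh violation, so the process halts after at most $k(x)\le |E|$ flips. I would also state explicitly the standing assumption that node weights are nonnegative (standard for maximum weighted independent set), which guarantees $\beta_{ij}\ge\max\{w_i,w_j\}\ge 0$ so that every term dropped from the quadratic sum is indeed a nonnegative quantity. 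As a closing remark, this lemma is exactly what makes $f$ \emph{unbiased} in the sense of \eqref{eq:energy}, since it forces the minimizers of $f$ to lie among feasible configurations, and it is the discrete analogue of the smoothing picture used later in the annealed training analysis.
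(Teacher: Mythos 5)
Your proof is correct and takes essentially the same local-repair approach as the paper's: unset one endpoint of a violated edge and verify that the energy does not increase because the quadratic drop of at least $\beta_{ij}\ge\max\{w_i,w_j\}\ge w_i$ dominates the linear loss $w_i$. The only differences are cosmetic --- you make the iteration explicit via induction on the number of violated edges and flip an arbitrary endpoint, whereas the paper's appendix performs a single flip of the smaller-weight endpoint and leaves the repetition to feasibility implicit.
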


\subsection{Maximum Clique}
A clique is a subset of the vertices $S \subseteq V$, such that every two distinct $i, j \in S$ are adjacent: $(i, j) \in E$. The maximum clique problem is to find a clique $S$ having the largest weight. Rigorously, if we denote $x_i = 1$ to indicate $i \in S$ and $x_i = 0$ to indicate $i \notin S$, the problem can be formulated as:
\begin{equation}
    \argmin_{x\in \{0, 1\}^n} c(x) := - \sum_{i=1}^n w_i x_i, \quad \text{ subject to } x_i x_j = 0, \forall (i, j) \in E^c \label{eq:mc}
\end{equation}
where $E^c = \{(i, j) \in V \times V: i \neq j, (i, j) \notin E\}$ is the set of complement edges on graph $G$. We define the corresponding energy function:
\begin{equation}
    f(x) := - \sum_{i=1}^n w_i x_i + \sum_{(i, j) \in E^c} \beta_{ij} x_i x_j \label{eq:f_mc}
\end{equation}
\begin{proposition}
If $\beta_{ij} \ge \max\{w_i, w_j\}$ for all $(i, j) \in E^c$, then for any $x \in \{0, 1\}^n$, there exists a $x' \in \{0, 1\}^n$ that satisfies the constraints in \eqref{eq:mc} and has lower energy: $f(x') \le f(x)$.
\end{proposition}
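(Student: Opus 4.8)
The plan is to notice that this proposition is the mirror image of Proposition~\ref{prop:mis}: a set $S \subseteq V$ is a clique in $G = (V, E, w)$ if and only if it is an independent set in the complement graph $G^c = (V, E^c, w)$, and comparing \eqref{eq:mc}--\eqref{eq:f_mc} with \eqref{eq:is}--\eqref{eq:f_is} shows that the maximum-clique instance on $G$ is literally the maximum-independent-set instance on $G^c$ (same node weights, with $E$ replaced by $E^c$). Hence the statement follows immediately by applying Proposition~\ref{prop:mis} to $G^c$. For completeness I would also spell out the underlying rounding argument directly, since it is short and self-contained.

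The direct argument is a greedy ``repair'' procedure. Given $x \in \{0,1\}^n$, if $x$ already satisfies every constraint in \eqref{eq:mc} we take $x' = x$ and are done. Otherwise there is a pair $(i,j) \in E^c$ with $x_i = x_j = 1$. Form $\tilde x$ from $x$ by flipping $x_i$ to $0$ (either endpoint works). The energy change is
\begin{equation}
    f(\tilde x) - f(x) \;=\; w_i - \sum_{l:\,(i,l)\in E^c} \beta_{il}\, x_l \;\le\; w_i - \beta_{ij} \;\le\; w_i - \max\{w_i, w_j\} \;\le\; 0 ,
\end{equation}
where the first inequality uses $x_j = 1$, $(i,j)\in E^c$, and $\beta_{il}\ge 0$ for the remaining terms, and the second uses the hypothesis $\beta_{ij} \ge \max\{w_i, w_j\}$. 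Thus $f(\tilde x) \le f(x)$, and $\tilde x$ has strictly fewer ones than $x$; moreover setting a coordinate from $1$ to $0$ can never turn a satisfied constraint $x_k x_l = 0$ into a violated one, since a constraint is violated only when both endpoints are $1$. Iterating, the number of active variables strictly decreases each step, so after at most $n$ repetitions we reach an $x'$ feasible for \eqref{eq:mc} with $f(x') \le f(x)$.

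There is no genuine obstacle here; the points that need care are (i) checking that flipping a $1$ to a $0$ cannot create a new violation (immediate), (ii) the termination argument via the monotonically decreasing count of ones, and (iii) verifying that the energy is non-increasing at every step — which is exactly where the threshold $\beta_{ij} \ge \max\{w_i, w_j\}$ enters and is the entire content of the proposition. If one instead takes the reduction route, the only thing to verify is the syntactic correspondence between \eqref{eq:mc}--\eqref{eq:f_mc} and \eqref{eq:is}--\eqref{eq:f_is} under $E \mapsto E^c$.
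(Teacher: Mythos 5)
Your proof is correct and follows essentially the same route as the paper's: locate a violated pair $(i,j) \in E^c$ with $x_i = x_j = 1$, zero out one endpoint, and bound the energy change using $\beta_{ij} \ge \max\{w_i, w_j\}$. You are in fact slightly more complete than the paper, whose appendix proof performs only a single repair step and leaves the iterate-to-feasibility and termination argument implicit; your observation that the clique instance is exactly the independent-set instance on the complement graph is also a valid shortcut the paper does not exploit.
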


\subsection{Minimum Dominate Set}
A dominate set is a subset of the vertices $S \subseteq V$, where for any $v \in V$, there exists $u \in S$ such that $(u, v) \in E$. The minimum dominate set problem is to find a dominate set $S$ having the minimum weight. Rigorously, if we denote $x_i = 1$ to indicate $i \in S$ and $x_i=0$ to indicate $i \notin S$, the problem can be formulated as:
\begin{equation}
    \argmin_{x\in \{0, 1\}^n} c(x) := \sum_{i=1}^n w_i x_i, \quad \text{ subject to } (1 - x_i) \prod_{j \in N(i)} (1 - x_j) =0, \forall i \in V \label{eq:mds}
\end{equation}
We define the corresponding energy function:
\begin{equation}
    f(x):= - \sum_{i=1}^n w_i x_i + \sum_{i=1}^n \beta_i (1-x_i) \prod_{j \in N(i)} (1 - x_j)
\end{equation}
\begin{proposition}
If $\beta_i \ge \min_k \{ w_k: k \in N(i) \text{ or } k = i\}$, then for any $x \in \{0, 1\}^n$, there exists a $x' \in \{0, 1\}^n$ that satisfies the constraints in \eqref{eq:mc} and has lower energy: $f(x') \le f(x)$.
\end{proposition}

\subsection{Minimum Cut}
A partition consists of two subsets: $S$ and $V \backslash S$. The cut $\text{cut}(S)$ is defined as the number of weights between $S$ and $V \backslash S$. The volume of $S$ is defined as $\text{vol}(S) = \sum_{i\in S} d_i$, where $d_i$ is the degree of node $i$. The minimum cut problem is to find a $S$ having the minimum cut, subject to the degree of $S$ is between $[D_0, D_1]$. Rigorously, if we denote $x_i = 1$ to indicate $i \in S$ and $x_i = 0$ to indicate $i \notin S$, the problem can be formulated as:
\begin{equation}
    \argmin_{x \in \{0, 1\}^n} c(x) := \sum_{(i, j)\in E} x_i (1 - x_j) w_{ij}, \quad \text{ subject to } D_0 \le \sum_{i=1}^n d_i x_i \le D_1
\end{equation}
We define the corresponding energy function:
\begin{equation}
    f(x) := \sum_{(i, j)\in E} x_i (1 - x_j) w_{ij} + \beta (\sum_{i=1}^n d_i x_i - D_1)_+ + \beta (D_0 - \sum_{i=1}^n d_i x_i )_+
\end{equation}
\begin{proposition}
If $\beta \ge \max_i\{\sum_{j \in N(i)} |w_{i, j}| \}$, then any $x \in \{0, 1\}^n$, there exists a $x' \in \{0, 1\}^n$ that satisfies the constraints in \eqref{eq:mc} and has lower energy: $f(x') \le f(x)$.
\end{proposition}

\section{Related Work}
Recently, there has been a surge of interest in learning algorithms for CO problems \citep{bengio2020machine}. Supervised learning is widely used. Numerous works have combined GNNs with search procedures to solve classical CO problems, such as the traveling salesman problem \citep{vinyals2015pointer, joshi2019efficient, prates2019learning}, graph matching \citep{wang2019learning, wang2020combinatorial}, quadratic assignments \citep{nowak2017note}, graph coloring \citep{lemos2019graph}, and maximum independent set \citep{li2018combinatorial}. Another fruitful direction is combining learning with existing solvers. For example, in the branch and bound algorithm, \citet{he2014learning, khalil2016learning, gasse2019exact, nair2020solving}  learn the variable selection policy by imitating the decision of oracle or rules designed by human experts. However, the success of supervised learning relies on large datasets with already solved instances, which is hard to efficiently generate in an unbiased and representative manner \citep{yehuda2020s}, 

Many works, therefore, choose to use reinforcement learning instead. \citet{dai2017learning} combines Q-learning with greedy algorithms to solve CO problems on graphs. Q-learning is also used in \citep{bai2020fast} for maximum subgraph problem. \citet{sun2020improving} uses an evolutionary strategy to learn variable selection in the branch and bound algorithm. \citet{yolcu2019learning} employs REINFORCE algorithm to learn local heuristics for SAT problems. \citet{chen2019learning} uses actor-critic learning to learn a local rewriting algorithm. Despite being a promising approach that avoids using labeled data, reinforcement learning is typically sample inefficient and notoriously unstable to train due to poor gradient estimations, correlations present in the sequence of observations, and hard explorations \citep{espeholt2018impala, tang2017exploration}.

Works in unsupervised learning show promising results. \citet{yao2019experimental} train GNN for the max-cut problem by optimizing a relaxation of the cut objective, \citet{toenshoff2021graph} trains RNN for maximum-SAT via maximizing the probability of its prediction. \citet{karalias2020erdos} use a graph neural network to predict the distribution and the graphical neural network to minimize the expectation of the objective function on this distribution. The probabilistic method provides a good framework for unsupervised learning. However, optimizing the distribution is typically non-convex \citep{mezard2009information}, making the training of the graph neural network very unstable. 

\section{Experiments}
\begin{table*}[tb]
\scriptsize
\caption{Evaluation of Maximum Independent Set} 
\label{tb:mis}
\begin{center}
    \begin{tabular*}{\textwidth}{l @{\extracolsep{\fill}} clclclcl}
    \toprule
    Size & \multicolumn{2}{c}{small} & \multicolumn{2}{c}{large} & \multicolumn{2}{c}{Collab} & \multicolumn{2}{c}{Twitter} \\
    \cmidrule(lr){1-1} \cmidrule(lr){2-3} \cmidrule(lr){4-5} \cmidrule(lr){6-7} \cmidrule(lr){8-9}  
    Method & \textit{ratio} & time (s) & \textit{ratio} & time (s) & \textit{ratio} & time (s) & \textit{ratio} & time (s) \\
    \midrule
    Erdos &  $0.805\pm0.052$ & $0.156$  & $0.781\pm0.644$ & $2.158$ & $0.986\pm0.056$ & $0.010$ & $0.975\pm0.033$ & $0.020$\\
    Our's  &  $\mathbf{0.898}\pm0.030$  & $0.165$ & $\mathbf{0.848}\pm0.529$ &  $2.045$ & $\mathbf{0.997}\pm0.020$ & $0.010$ & $\mathbf{0.986}\pm0.012$ & $0.020$\\
    \midrule
    Greedy & $0.761\pm0.058$ & $0.002$  & $0.720\pm0.046$ & $0.009$ & $0.996\pm0.017$ & $0.001$ & $0.957\pm0.037$ & $0.006$  \\
    MFA  & $0.784 \pm 0.058$ & $0.042$  & $0.747\pm 0.056$ & $0.637$  & $0.998\pm0.007$ & $0.002$ & $0.994\pm0.010$ & $0.003$  \\
    RUNCSP    &  $0.823 \pm 0.145$ & $1.936$  & $0.587\pm0.312$ & $7.282$ & $0.912\pm0.101$ & $0.254$ & $0.845\pm0.184$ &  $4.429$   \\
    G(0.5s) & $0.864\pm0.169$ & $0.723$  & $0.632\pm0.176$ &  $1.199$ & $1.000\pm0.000$ & $0.029$ & $0.950\pm0.191$ & $0.441$  \\
    G(1.0s) &  $0.972\pm0.065$ & $1.063$  & $0.635\pm0.176$  &   $1.686$ & $1.000\pm0.000$& $0.029$ &  $1.000\pm0.000$ & $0.462$ \\
    \bottomrule
    \end{tabular*}
\end{center}
\end{table*}

\begin{table*}[tb]
\scriptsize
\caption{Evaluation of Maximum Clique} 
\label{tb:mc}
\begin{center}
    \begin{tabular*}{\textwidth}{l @{\extracolsep{\fill}} clclclcl}
    \toprule
    Size & \multicolumn{2}{c}{small} & \multicolumn{2}{c}{large} & \multicolumn{2}{c}{Collab} & \multicolumn{2}{c}{Twitter} \\
    \cmidrule(lr){1-1} \cmidrule(lr){2-3} \cmidrule(lr){4-5} \cmidrule(lr){6-7} \cmidrule(lr){8-9}  
    Method & \textit{ratio} & time (s) & \textit{ratio} & time (s) & \textit{ratio} & time (s) & \textit{ratio} & time (s) \\
    \midrule
    Erdos & $0.813\pm0.067$  & $0.279$  & $0.735\pm0.084$ & $0.622$  & $0.960\pm0.019$ & $0.139$ &  $0.822\pm0.085$&  $0.222$ \\
    Our's  & $\mathbf{0.901}\pm0.055$ &  $0.262$ & $\mathbf{0.831}\pm0.078$ & $0.594$ & $\mathbf{0.988}\pm0.011$ & $0.143$ &  $\mathbf{0.920}\pm0.083$ &$0.213$\\
    \midrule
    Greedy   & $0.764\pm0.064$ & $0.002$  & $0.727\pm0.038$   & $0.014$ & $0.999\pm0.002$ & $0.001$ & $0.959\pm0.034$ & $0.001$ \\
    MFA      & $0.804\pm0.064$ & $0.144$ & $0.710\pm0.045$ & $0.147$ & $1.000\pm0.000$ & $0.005$ & $0.994\pm0.010$ & $0.010$ \\
    RUNCSP   &  $0.821 \pm 0.131$ & $2.045$  & $0.574\pm0.299$ & $7.332$ & $0.887\pm0.134$ & $0.164$ & $0.832\pm0.153$ &  $4.373$   \\
    G(0.5s) & $0.948\pm0.076$  & $0.599$  & $0.812\pm0.087$  &  $0.617$ & $ 0.997\pm0.035$ & $0.061$ & $0.976\pm0.065$ & $0.382$   \\
    G(1.0s) & $0.984\pm0.042$ & $0.705$ & $0.847\pm0.101$ & $1.077$ & $0.999\pm0.015$& $0.062$ &  $0.997\pm0.029$ & $0.464$    \\

    \bottomrule
    \end{tabular*}
\end{center}
\end{table*}
\subsection{Settings}
{\bf Dataset}:
For maximum independent set and maximum clique, problems on both real graphs and random graphs are easy \citep{dai2020framework}. Hence, we follow 
\citet{karalias2020erdos} to use RB graphs \citep{xu2007random}, which has been specifically designed to generate hard instances. We use a small size dataset containing graphs with 200-300 nodes and a large size dataset containing graphs with 800-1200 nodes. For maximum dominate set, we follow \citet{dai2020framework} to use BA graphs with 4 attaching edges \citep{barabasi1999emergence}. We also use a small size dataset containing graphs with 200-300 nodes and a large size dataset containing graphs with 800-1200 nodes. We also use real graph datasets Collab, Twitter from TUdataset \citep{morris2020tudataset}. For minimum cut, we follow \citet{karalias2020erdos} and use real graph datasets including SF-295 \citep{yan2008mining}, Facebook \citep{traud2012social}, and Twitter \citep{morris2020tudataset}. For RB graphs, the optimal solution is known in generation. For other problems, we generate the "ground truth" solution through Gurobi 9.5 \citep{gurobi19gurobi} with a time limit of 3600 seconds. For synthetic datasets, we generate 2000 graphs for training, 500 graphs for validation, and 500 graphs for testing. For real datasets, we follow \citet{karalias2020erdos} and use a 60-20-20 split for training, validating and testing.

{\bf Implementation}:
We train our graph neural network on training data with 500 epochs.
We choose the penalty coefficient $\beta$ at the critical point for each problem type. We use the schedule:
\begin{equation}
    \tau_k = \tau_0 / (1 + \alpha k) \label{eq:linear_schedule}
\end{equation}
where $\tau_0$ is chosen as the Lipschitz constant of the energy function \eqref{eq:energy} and $\alpha$ is selected to make sure the final temperature $\tau_500 = 0.001$.
Since the contribution of this work focuses on the training framework, the architecture of the graph neural network is not important. Hence, we simply follow \citet{karalias2020erdos} for fair comparison. In particular, the architecture consists of multiple layers of the Graph Isomorphism Network \citep{xu2018powerful} and a graph Attention \citep{velivckovic2017graph}. Each convolutional layer was equipped with skip connections, batch normalization, and graph size normalization \citep{dwivedi2020benchmarking}. More details refer to \citet{karalias2020erdos}. After obtaining the variational distribution $Q_\phi$ \eqref{eq:q}, we generate the solution via conditional decoding \citep{raghavan1988probabilistic}.

{\bf Baselines}: We compare our method with unsupervised neural methods, classical algorithms, and integer programming solvers. To establish a strong baseline for neural methods, we use the Erdos method \citep{karalias2020erdos}, the state-of-the-art unsupervised learning framework for combinatorial optimization problems. For maximum clique and maximum independent set, we transform them to maximum 2-sat and compare with RUNCSP \citep{toenshoff2021graph}. For minimum cut, we follow \citet{karalias2020erdos} and built the L1 GNN and L2 GNN. In classical algorithms, we consider greedy algorithms and mean field annealing (MFA) \citep{bilbro1988optimization}. MFA also runs mean field approximation \citep{anderson1988neural} to predict a variational distribution as our method. The difference is that the update rule of MFA is determined after seeing the current graph, while the parameters in GNN is trained on the whole dataset. Also, in minimum cut, we follow \citep{karalias2020erdos} to compare with well-known and advanced algorithms: \textit{Pageran-Nibble} \citep{andersen2006local}, Captacity Releasing Diffusion (CRD) \citep{wang2017capacity}, Max-flow Quotient-cut Improvement \citep{lang2004flow}, and Simple-Local \citep{veldt2016simple}. For integer programming solver, we use Gurobi 9.0 \citep{gurobi19gurobi} and set different time limits $t$. We denote G($t$s) as Gurobi 9.0 ($t$ s). where $t$ is the time limit.

\vspace{-2mm}
\subsection{Results}
\vspace{-1mm}
We report the results for maximum independent set in Table \ref{tb:mis}, the results for maximum clique in Table \ref{tb:mc}, the results for minimum dominate set in Table \ref{tb:mds}, the results for minimum cut in Table \ref{tb:cut}. In maximum independent set and maximum clique, we report the ratios computed by dividing the optimal value by the obtained value (the larger, the better). In minimum dominating set, we report the ratios computed from the obtained value by dividing the optimal value (the larger, the better). In minimum cut, we follow \citet{karalias2020erdos} and evaluate the performance via local conductance: $\text{cut}(S) / \text{vol}(S)$ (the smaller the better). We can see that the annealed training substantially improves the performance of Erdos across all problem types and all datasets by utilizing a better unsupervised training framework. Besides, with annealed training, the learned GNN outperforms meanfield annealing in most problems, which indicates learning the shared patterns in graphs is helpful.

\vspace{-2mm}
\subsection{Parameter Change Distance}
\vspace{-1mm}
We want to stress that we use precisely the same graph neural network as Erdos, and the performance improvements come from our annealed training framework. In scatter plot \ref{fig:distance_mis}, \ref{fig:distance_mds}, we report the relative change for the parameters of GNN in maximum independent set and minimum dominate set problems on the Twitter dataset. The relative change is calculated as $\frac{\|u - v\|_2}{\|v\|_2}$, where $v$ and $u$ are vectors flattened from the parameters of GNN before and after training. For each method, we run 20 seeds. After introducing the annealed training, we can see that both the ratio and the relative change of the parameters have systematic increase, which means the parameters of GNN can traverse to more distant regions and find better optima in annealed learning. We believe these evidence effectively support that annealed training prevents the training from being stuck at local optima.

\begin{table*}[tb]
\scriptsize
\caption{Evaluation of Minimum Dominate Set} 
\label{tb:mds}
\begin{center}
    \begin{tabular*}{\textwidth}{l @{\extracolsep{\fill}} clclclcl}
    \toprule
    Size & \multicolumn{2}{c}{small} & \multicolumn{2}{c}{large} & \multicolumn{2}{c}{Collab} & \multicolumn{2}{c}{Twitter} \\
    \cmidrule(lr){1-1} \cmidrule(lr){2-3} \cmidrule(lr){4-5} \cmidrule(lr){6-7} \cmidrule(lr){8-9}  
    Method & \textit{ratio} & time (s) & \textit{ratio} & time (s) & \textit{ratio} & time (s) & \textit{ratio} & time (s) \\
    \midrule
    Erdos & $0.909\pm0.037$ & $0.121$ & $0.889\pm0.017$ & $0.449$  & $0.982\pm0.070$ & $0.007$  & $0.924\pm 0.098$ & $0.015$ \\
    Our's  & $\mathbf{0.954}\pm0.006$ & $0.120$ & $\mathbf{0.931}\pm0.015$ & $0.453$ & $\mathbf{0.993}\pm0.062$ & $0.006$ & $\mathbf{0.952}\pm0.074$ & $0.016$  \\
    \midrule
    Greedy   & $0.743\pm0.053$ & $0.254$ & $0.735\pm0.026$ &  $3.130$ &  $0.661\pm0.406$ & $0.028$  & $0.741\pm0.142$ & $0.079$ \\
    MFA      & $0.926\pm0.032$ & $0.213$ & $0.910\pm0.016$ & $3.520$ &  $0.895\pm0.210$ & $0.030$ & $0.952\pm0.076$ & $0.099$  \\
    G(0.5s) & $0.993\pm0.014$ & $0.381$ & $0.994\pm0.013$ &  $0.384$ &$1.000\pm0.000$ & $0.042$ & $1.000\pm0.000$& $0.084$ \\
    G(1.0s) & $0.999\pm0.005$ & $0.538$ & $0.999\pm0.005$  &  $0.563$ &$1.000\pm0.000$ & $0.042$ & $0.839\pm0.000$ & $0.084$  \\
    \bottomrule
    \end{tabular*}
\end{center}
\end{table*}

\begin{table*}[tb]
\scriptsize
\caption{Evaluation of Minimum Cut} 
\label{tb:cut}
\begin{center}
    \begin{tabular*}{\textwidth}{l @{\extracolsep{\fill}} crcrcr}
    \toprule
    Size & \multicolumn{2}{c}{SF-295} & \multicolumn{2}{c}{Facebook} & \multicolumn{2}{c}{Twitter} \\
    \cmidrule(lr){1-1} \cmidrule(lr){2-3} \cmidrule(lr){4-5} \cmidrule(lr){6-7}
    Method & \textit{ratio} & time (s) & \textit{ratio} & time (s) & \textit{ratio} & time (s)  \\
    \midrule
    Erdos & $\mathbf{0.124}\pm0.001$  & $0.22$  & $0.156\pm0.026$  & $289.3$ & $0.292\pm0.009$ & $6.17$   \\
    Our's  & $0.135\pm0.011$  & $0.23$  & $\mathbf{0.151}\pm0.045$ & $290.5$ & $\mathbf{0.201}\pm0.007$ & $6.16$  \\
    \midrule
    L1 GNN   & $0.188\pm0.045$ & $0.02$ & $0.571\pm0.191$ & $13.83$  &$0.318\pm0.077$ & $0.53$ \\
    L2 GNN  & $0.149\pm0.038$ & $0.01$  & $0.305\pm 0.082$ & $13.83$ & $0.388 \pm 0.074$ & $0.53$  \\
    Pagerank-Nibble & $0.375\pm0.001$ & $1.48$  &  N/A & N/A  &  $0.603\pm0.005$ & $20.62$  \\
    CRD & $0.364\pm0.001$ & $0.03$ & $0.301\pm0.097$ & $596.46$ & $0.502\pm0.020$  & $20.35$   \\
    MQI & $0.659\pm0.000$ & $0.03$ & $0.935\pm0.024$ & $408.52$ & $0.887 \pm 0.007$ & $0.71$\\
    Simple-Local & $0.650\pm0.024$ & $0.05$ & $0.961\pm0.019$ & $1787.79$ & $0.895 \pm 0.006$ & $0.84$ \\
    G(10s) & $0.105\pm0.000$ & $0.16$ & $0.961\pm0.010$ & $1787.79$ & $0.535\pm0.006$ & $52.98$  \\
    \bottomrule
    \end{tabular*}
\end{center}
\end{table*}

\vspace{-2mm}
\section{Ablation Study}
\vspace{-1mm}
We conduct an ablation study to answer two questions: 
\begin{enumerate}
\item How does the penalty coefficient $\beta$ in \eqref{eq:energy} influence the performance?
\item How does the annealing schedule influence the performance?
\end{enumerate}
 We conduct the experiments for the minimum dominating set problem on the small BA graphs from the previous section.

\vspace{-2mm}
\subsection{Penalty Coefficient}
\vspace{-1mm}
In minimum dominating set problem, we know that the minimum penalty coefficient $\beta$ needed to make sure the EBMs unbiased on the unweighted BA graphs is $\beta = 1.0$. To justify the importance to use the minimum penalty, we evaluate the performance for $\beta =$ \{0.0, 0.25, 0.5, 0.75, 1.0, 2.0, 3.0, 5.0\}. For each $\beta$, we run experiments with five random seeds, and we report the result in Figure \ref{fig:ablation_beta}. We can see that the minimum penalty $\beta = 1$ has the best ratio. When the penalty coefficient $\beta < 1$, the EBMs \eqref{eq:model} are biased and have weights on infeasible solutions, thereby reduces the performance. When the penalty coefficient $\beta > 1$, the energy model \eqref{eq:model} becomes less smooth and increases the difficulty in training. The penalty coefficient $\beta=1$ gives the smoothest unbiased EBMs and has the best performance. We want to note that, when $\beta = 0$, the loss function is non-informative, and the performance ratio can be as low as $0.3$, so we do not plot its result in the figure.
\begin{figure}
    \centering
    \begin{minipage}{0.32\textwidth}
    \centering
    \includegraphics[width=.99\textwidth]{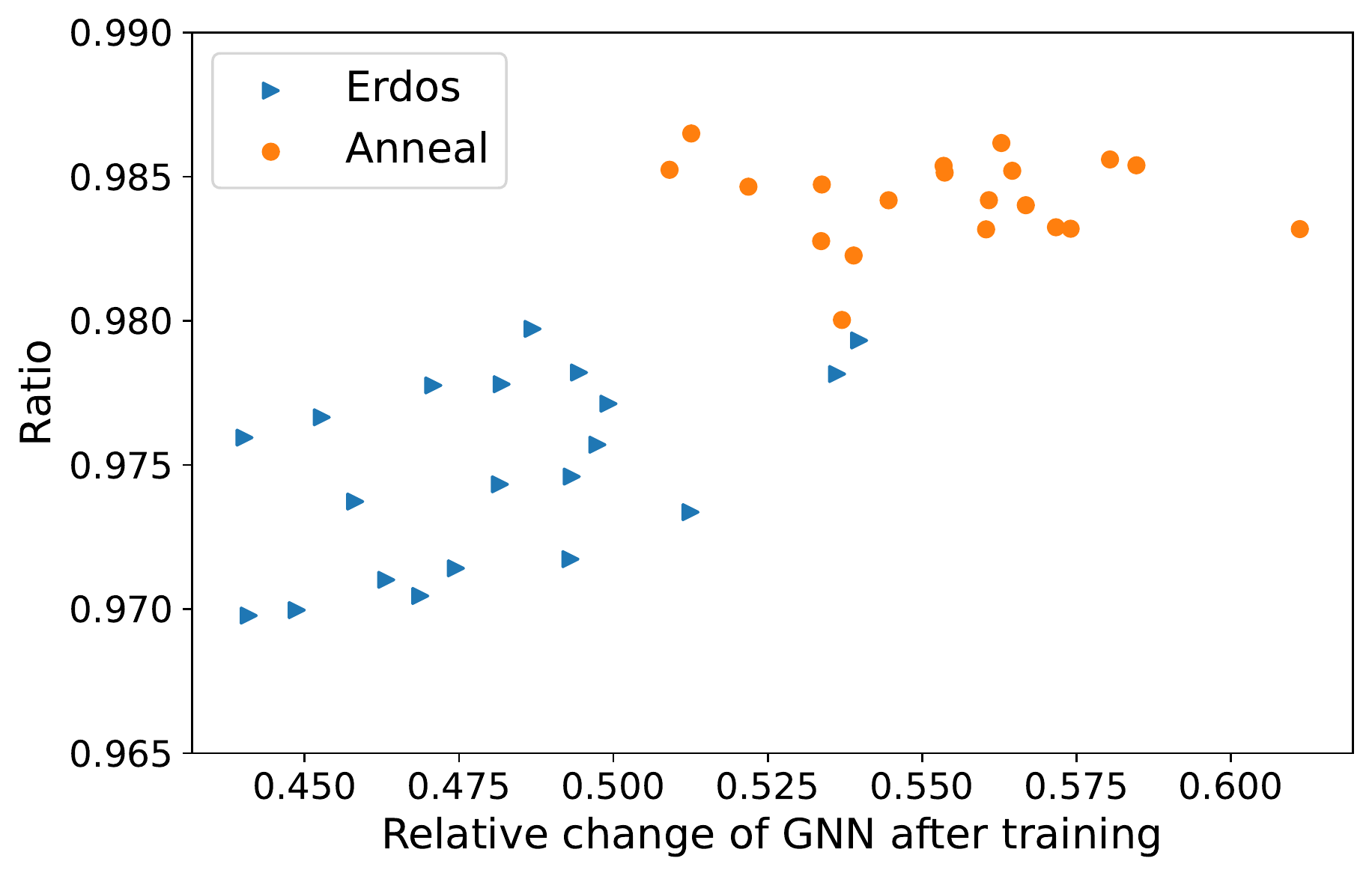}
    \caption{Distance in MIS}
    \label{fig:distance_mis}
    \end{minipage}
    \begin{minipage}{0.32\textwidth}
    \centering
    \includegraphics[width=.99\textwidth]{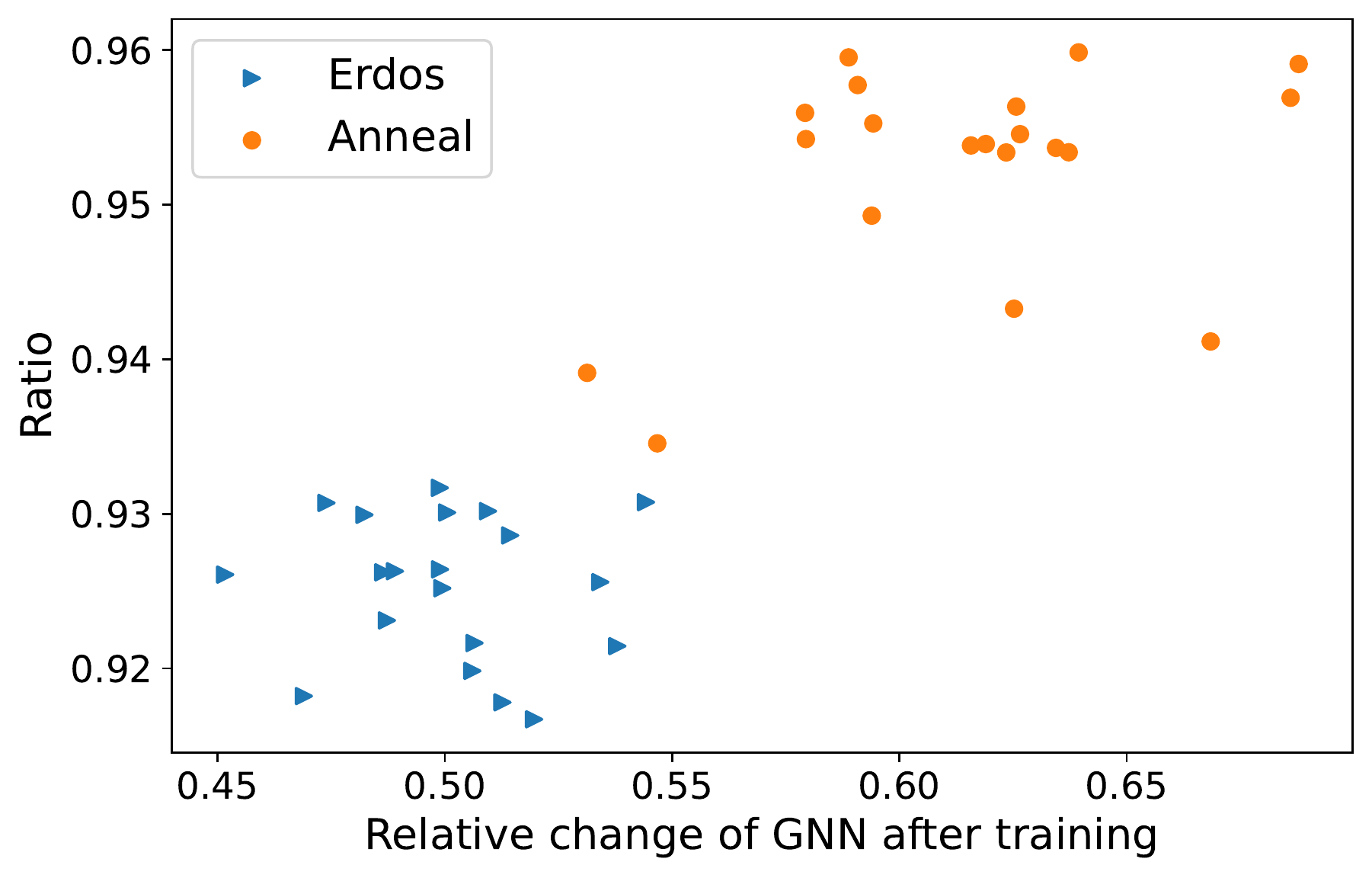}
    \caption{Distance in MDS}
    \label{fig:distance_mds}
    \end{minipage}
    \begin{minipage}{0.32\textwidth}
    \centering
    \includegraphics[width=.99\textwidth]{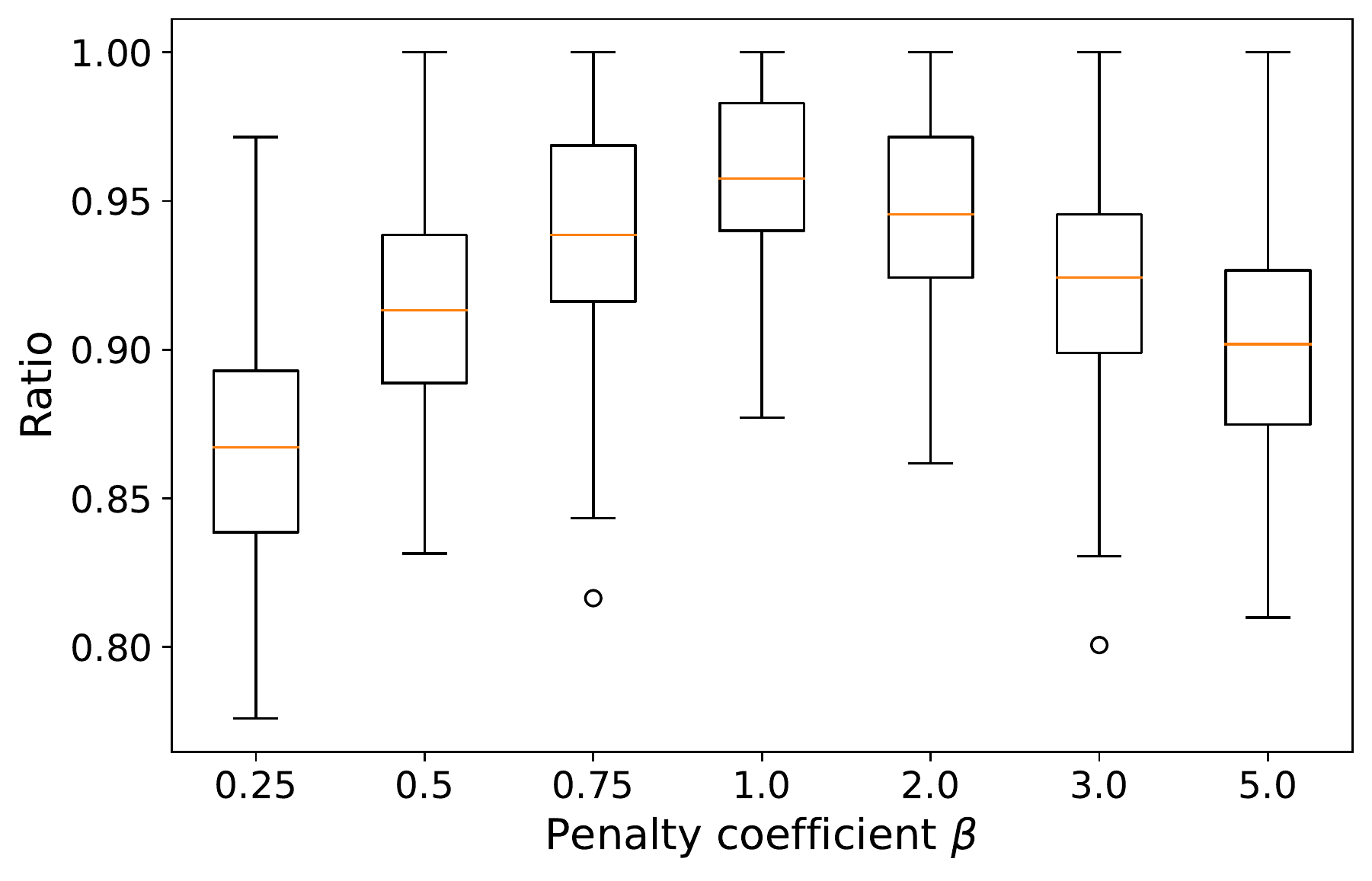}
    \caption{Ablation for $\beta$}
    \label{fig:ablation_beta}
    \end{minipage}
\end{figure}

\vspace{-2mm}
\subsection{Annealing Schedule}
\vspace{-1mm}
We use the schedule \eqref{eq:linear_schedule} so as to make sure the potential change $f / \tau_{k+1} - f / \tau_{k} \equiv C$ is a constant for all steps $k$. In fact, with the schedule \eqref{eq:linear_schedule}, the potential $f / \tau_k = (1 + \alpha (k-1)) f / \tau_0$ is a linear function w.r.t. $k$. Hence, we name it as linear schedule. It is possible to use other schedules, e.g. $f / \tau_k = (1 + \alpha (k-1))^\frac{1}{2} f / \tau_0$ and $f / \tau_k = (1 + \alpha (k-1))^3 f / \tau_0$, and we name them as concave and convex schedule. The visualization of the temperature schedule and the potential schedule is given in Figure \ref{fig:ablation_schedule}. Besides schedule, the initial temperature is also an important hyperparameter. We evaluate the initial temperature $\tau_0 = $ \{0.0, 0.1, 0.5, 1.0, 2.0, 5.0\}. We report the results in Figure \ref{fig:ablation_schedule}. We can see that the performance is robust for whatever convex, linear, or concave schedule used. The more important factor is the initial temperature $\tau_0$. The performance is reduced when $\tau_0$ is too small as the energy based model \eqref{eq:model} is not smooth enough and the performance is robust when $\tau_0$ is large.

\begin{figure}
    \centering
    \includegraphics[width=.9\textwidth]{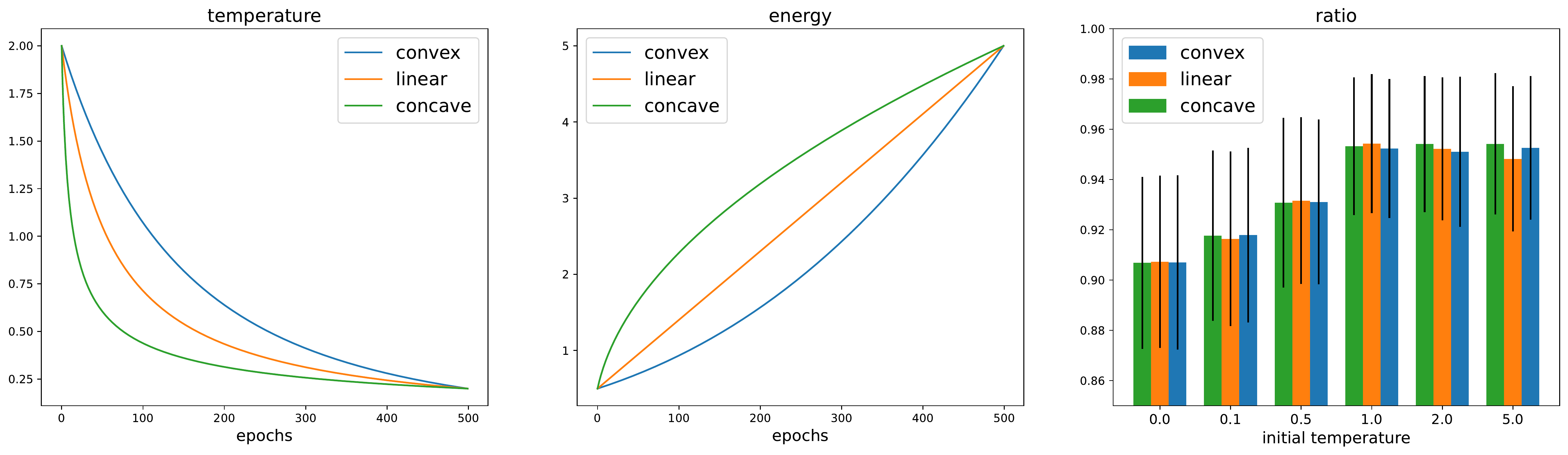}
    \caption{Ablation for annealing schedule}
    \label{fig:ablation_schedule}
\end{figure}

\vspace{-2mm}
\section{Discussion}
\vspace{-1mm}
This paper proposes a generic unsupervised learning framework for combinatorial optimization problems and substantially improve the performance of the state-of-the-art method. The success of the framework relies on smoothing the loss function via critical penalty coefficients and annealed training as they effectively prevent the training from being stuck at local optima. The techniques introduced here can be potentially applied in a broader context beyond combinatorial optimization, especially in the weakly supervised learning setting like logic reasoning~\citep{huang2021scallop}, program induction~\citep{chen2020compositional}, question answering~\citep{ren2021lego} where fine-grained supervisions are missing and required to be inferred.

\bibliography{reference}

\begin{thebibliography}{59}
\providecommand{\natexlab}[1]{#1}
\providecommand{\url}[1]{\texttt{#1}}
\expandafter\ifx\csname urlstyle\endcsname\relax
  \providecommand{\doi}[1]{doi: #1}\else
  \providecommand{\doi}{doi: \begingroup \urlstyle{rm}\Url}\fi

\bibitem[Aarts et~al.(2003)Aarts, Aarts, and Lenstra]{aarts2003local}
Aarts, E., Aarts, E.~H., and Lenstra, J.~K.
\newblock \emph{Local search in combinatorial optimization}.
\newblock Princeton University Press, 2003.

\bibitem[Andersen et~al.(2006)Andersen, Chung, and Lang]{andersen2006local}
Andersen, R., Chung, F., and Lang, K.
\newblock Local graph partitioning using pagerank vectors.
\newblock In \emph{2006 47th Annual IEEE Symposium on Foundations of Computer
  Science (FOCS'06)}, pp.\  475--486. IEEE, 2006.

\bibitem[ANDERSON(1988)]{anderson1988neural}
ANDERSON, C. P.~J.
\newblock Neural networks and np-complete optimization problems; a performance
  study on the graph bisection problem.
\newblock \emph{Complex Systems}, 2:\penalty0 58--59, 1988.

\bibitem[Bai et~al.(2020)Bai, Xu, Wang, Gu, Wu, Marinovic, Ro, Sun, and
  Wang]{bai2020fast}
Bai, Y., Xu, D., Wang, A., Gu, K., Wu, X., Marinovic, A., Ro, C., Sun, Y., and
  Wang, W.
\newblock Fast detection of maximum common subgraph via deep q-learning.
\newblock \emph{arXiv preprint arXiv:2002.03129}, 2020.

\bibitem[Barab{\'a}si \& Albert(1999)Barab{\'a}si and
  Albert]{barabasi1999emergence}
Barab{\'a}si, A.-L. and Albert, R.
\newblock Emergence of scaling in random networks.
\newblock \emph{science}, 286\penalty0 (5439):\penalty0 509--512, 1999.

\bibitem[Bengio et~al.(2009)Bengio, Louradour, Collobert, and
  Weston]{bengio2009curriculum}
Bengio, Y., Louradour, J., Collobert, R., and Weston, J.
\newblock Curriculum learning.
\newblock In \emph{Proceedings of the 26th annual international conference on
  machine learning}, pp.\  41--48, 2009.

\bibitem[Bengio et~al.(2020)Bengio, Lodi, and Prouvost]{bengio2020machine}
Bengio, Y., Lodi, A., and Prouvost, A.
\newblock Machine learning for combinatorial optimization: a methodological
  tour d’horizon.
\newblock \emph{European Journal of Operational Research}, 2020.

\bibitem[Bilbro et~al.(1988)Bilbro, Mann, Miller, Snyder, van~den Bout, and
  White]{bilbro1988optimization}
Bilbro, G., Mann, R., Miller, T., Snyder, W., van~den Bout, D., and White, M.
\newblock Optimization by mean field annealing.
\newblock \emph{Advances in neural information processing systems}, 1:\penalty0
  91--98, 1988.

\bibitem[Chen \& Tian(2019)Chen and Tian]{chen2019learning}
Chen, X. and Tian, Y.
\newblock Learning to perform local rewriting for combinatorial optimization.
\newblock \emph{Advances in Neural Information Processing Systems},
  32:\penalty0 6281--6292, 2019.

\bibitem[Chen et~al.(2020)Chen, Liang, Yu, Song, and
  Zhou]{chen2020compositional}
Chen, X., Liang, C., Yu, A.~W., Song, D., and Zhou, D.
\newblock Compositional generalization via neural-symbolic stack machines.
\newblock \emph{Advances in Neural Information Processing Systems},
  33:\penalty0 1690--1701, 2020.

\bibitem[Dai et~al.(2017)Dai, Khalil, Zhang, Dilkina, and
  Song]{dai2017learning}
Dai, H., Khalil, E.~B., Zhang, Y., Dilkina, B., and Song, L.
\newblock Learning combinatorial optimization algorithms over graphs.
\newblock \emph{arXiv preprint arXiv:1704.01665}, 2017.

\bibitem[Dai et~al.(2020)Dai, Chen, Li, Gao, and Song]{dai2020framework}
Dai, H., Chen, X., Li, Y., Gao, X., and Song, L.
\newblock A framework for differentiable discovery of graph algorithms.
\newblock 2020.

\bibitem[Dorigo et~al.(2006)Dorigo, Birattari, and Stutzle]{dorigo2006ant}
Dorigo, M., Birattari, M., and Stutzle, T.
\newblock Ant colony optimization.
\newblock \emph{IEEE computational intelligence magazine}, 1\penalty0
  (4):\penalty0 28--39, 2006.

\bibitem[Dwivedi et~al.(2020)Dwivedi, Joshi, Laurent, Bengio, and
  Bresson]{dwivedi2020benchmarking}
Dwivedi, V.~P., Joshi, C.~K., Laurent, T., Bengio, Y., and Bresson, X.
\newblock Benchmarking graph neural networks.
\newblock \emph{arXiv preprint arXiv:2003.00982}, 2020.

\bibitem[Espeholt et~al.(2018)Espeholt, Soyer, Munos, Simonyan, Mnih, Ward,
  Doron, Firoiu, Harley, Dunning, et~al.]{espeholt2018impala}
Espeholt, L., Soyer, H., Munos, R., Simonyan, K., Mnih, V., Ward, T., Doron,
  Y., Firoiu, V., Harley, T., Dunning, I., et~al.
\newblock Impala: Scalable distributed deep-rl with importance weighted
  actor-learner architectures.
\newblock In \emph{International Conference on Machine Learning}, pp.\
  1407--1416. PMLR, 2018.

\bibitem[Garey \& Johnson(1979)Garey and Johnson]{garey1979computers}
Garey, M.~R. and Johnson, D.~S.
\newblock \emph{Computers and intractability}, volume 174.
\newblock freeman San Francisco, 1979.

\bibitem[Gasse et~al.(2019)Gasse, Ch{\'e}telat, Ferroni, Charlin, and
  Lodi]{gasse2019exact}
Gasse, M., Ch{\'e}telat, D., Ferroni, N., Charlin, L., and Lodi, A.
\newblock Exact combinatorial optimization with graph convolutional neural
  networks.
\newblock \emph{arXiv preprint arXiv:1906.01629}, 2019.

\bibitem[Gurobi~Optimization()]{gurobi19gurobi}
Gurobi~Optimization, L.
\newblock Gurobi optimizer reference manual, 2022.
\newblock \emph{URL http://www. gurobi. com}, 19.

\bibitem[He et~al.(2014)He, Daume~III, and Eisner]{he2014learning}
He, H., Daume~III, H., and Eisner, J.~M.
\newblock Learning to search in branch and bound algorithms.
\newblock \emph{Advances in neural information processing systems},
  27:\penalty0 3293--3301, 2014.

\bibitem[Hopfield \& Tank(1985)Hopfield and Tank]{hopfield1985neural}
Hopfield, J.~J. and Tank, D.~W.
\newblock “neural” computation of decisions in optimization problems.
\newblock \emph{Biological cybernetics}, 52\penalty0 (3):\penalty0 141--152,
  1985.

\bibitem[Huang et~al.(2021)Huang, Li, Chen, Samel, Naik, Song, and
  Si]{huang2021scallop}
Huang, J., Li, Z., Chen, B., Samel, K., Naik, M., Song, L., and Si, X.
\newblock Scallop: From probabilistic deductive databases to scalable
  differentiable reasoning.
\newblock \emph{Advances in Neural Information Processing Systems}, 34, 2021.

\bibitem[Joshi et~al.(2019)Joshi, Laurent, and Bresson]{joshi2019efficient}
Joshi, C.~K., Laurent, T., and Bresson, X.
\newblock An efficient graph convolutional network technique for the travelling
  salesman problem.
\newblock \emph{arXiv preprint arXiv:1906.01227}, 2019.

\bibitem[Karalias \& Loukas(2020)Karalias and Loukas]{karalias2020erdos}
Karalias, N. and Loukas, A.
\newblock Erdos goes neural: an unsupervised learning framework for
  combinatorial optimization on graphs.
\newblock \emph{arXiv preprint arXiv:2006.10643}, 2020.

\bibitem[Karp(1972)]{karp1972reducibility}
Karp, R.~M.
\newblock Reducibility among combinatorial problems.
\newblock In \emph{Complexity of computer computations}, pp.\  85--103.
  Springer, 1972.

\bibitem[Khalil et~al.(2016)Khalil, Le~Bodic, Song, Nemhauser, and
  Dilkina]{khalil2016learning}
Khalil, E., Le~Bodic, P., Song, L., Nemhauser, G., and Dilkina, B.
\newblock Learning to branch in mixed integer programming.
\newblock In \emph{Proceedings of the AAAI Conference on Artificial
  Intelligence}, volume~30, 2016.

\bibitem[Kirkpatrick et~al.(1983)Kirkpatrick, Gelatt, and
  Vecchi]{kirkpatrick1983optimization}
Kirkpatrick, S., Gelatt, C.~D., and Vecchi, M.~P.
\newblock Optimization by simulated annealing.
\newblock \emph{science}, 220\penalty0 (4598):\penalty0 671--680, 1983.

\bibitem[Lang \& Rao(2004)Lang and Rao]{lang2004flow}
Lang, K. and Rao, S.
\newblock A flow-based method for improving the expansion or conductance of
  graph cuts.
\newblock In \emph{International Conference on Integer Programming and
  Combinatorial Optimization}, pp.\  325--337. Springer, 2004.

\bibitem[Lemos et~al.(2019)Lemos, Prates, Avelar, and Lamb]{lemos2019graph}
Lemos, H., Prates, M., Avelar, P., and Lamb, L.
\newblock Graph colouring meets deep learning: Effective graph neural network
  models for combinatorial problems.
\newblock In \emph{2019 IEEE 31st International Conference on Tools with
  Artificial Intelligence (ICTAI)}, pp.\  879--885. IEEE, 2019.

\bibitem[Li et~al.(2018)Li, Chen, and Koltun]{li2018combinatorial}
Li, Z., Chen, Q., and Koltun, V.
\newblock Combinatorial optimization with graph convolutional networks and
  guided tree search.
\newblock \emph{arXiv preprint arXiv:1810.10659}, 2018.

\bibitem[Mezard \& Montanari(2009)Mezard and Montanari]{mezard2009information}
Mezard, M. and Montanari, A.
\newblock \emph{Information, physics, and computation}.
\newblock Oxford University Press, 2009.

\bibitem[Morris et~al.(2020)Morris, Kriege, Bause, Kersting, Mutzel, and
  Neumann]{morris2020tudataset}
Morris, C., Kriege, N.~M., Bause, F., Kersting, K., Mutzel, P., and Neumann, M.
\newblock Tudataset: A collection of benchmark datasets for learning with
  graphs.
\newblock \emph{arXiv preprint arXiv:2007.08663}, 2020.

\bibitem[Nair et~al.(2020)Nair, Bartunov, Gimeno, von Glehn, Lichocki, Lobov,
  O'Donoghue, Sonnerat, Tjandraatmadja, Wang, et~al.]{nair2020solving}
Nair, V., Bartunov, S., Gimeno, F., von Glehn, I., Lichocki, P., Lobov, I.,
  O'Donoghue, B., Sonnerat, N., Tjandraatmadja, C., Wang, P., et~al.
\newblock Solving mixed integer programs using neural networks.
\newblock \emph{arXiv preprint arXiv:2012.13349}, 2020.

\bibitem[Nemhauser et~al.(1978)Nemhauser, Wolsey, and
  Fisher]{nemhauser1978analysis}
Nemhauser, G.~L., Wolsey, L.~A., and Fisher, M.~L.
\newblock An analysis of approximations for maximizing submodular set
  functions—i.
\newblock \emph{Mathematical programming}, 14\penalty0 (1):\penalty0 265--294,
  1978.

\bibitem[Nowak et~al.(2017)Nowak, Villar, Bandeira, and Bruna]{nowak2017note}
Nowak, A., Villar, S., Bandeira, A.~S., and Bruna, J.
\newblock A note on learning algorithms for quadratic assignment with graph
  neural networks.
\newblock \emph{stat}, 1050:\penalty0 22, 2017.

\bibitem[Padberg \& Rinaldi(1991)Padberg and Rinaldi]{padberg1991branch}
Padberg, M. and Rinaldi, G.
\newblock A branch-and-cut algorithm for the resolution of large-scale
  symmetric traveling salesman problems.
\newblock \emph{SIAM review}, 33\penalty0 (1):\penalty0 60--100, 1991.

\bibitem[Paschos(2013)]{paschos2013applications}
Paschos, V.~T.
\newblock \emph{Applications of combinatorial optimization}.
\newblock John Wiley \& Sons, 2013.

\bibitem[Prates et~al.(2019)Prates, Avelar, Lemos, Lamb, and
  Vardi]{prates2019learning}
Prates, M., Avelar, P.~H., Lemos, H., Lamb, L.~C., and Vardi, M.~Y.
\newblock Learning to solve np-complete problems: A graph neural network for
  decision tsp.
\newblock In \emph{Proceedings of the AAAI Conference on Artificial
  Intelligence}, volume~33, pp.\  4731--4738, 2019.

\bibitem[Raghavan(1988)]{raghavan1988probabilistic}
Raghavan, P.
\newblock Probabilistic construction of deterministic algorithms: approximating
  packing integer programs.
\newblock \emph{Journal of Computer and System Sciences}, 37\penalty0
  (2):\penalty0 130--143, 1988.

\bibitem[Ren et~al.(2021)Ren, Dai, Dai, Chen, Yasunaga, Sun, Schuurmans,
  Leskovec, and Zhou]{ren2021lego}
Ren, H., Dai, H., Dai, B., Chen, X., Yasunaga, M., Sun, H., Schuurmans, D.,
  Leskovec, J., and Zhou, D.
\newblock Lego: Latent execution-guided reasoning for multi-hop question
  answering on knowledge graphs.
\newblock In \emph{International Conference on Machine Learning}, pp.\
  8959--8970. PMLR, 2021.

\bibitem[Selsam et~al.(2018)Selsam, Lamm, B{\"u}nz, Liang, de~Moura, and
  Dill]{selsam2018learning}
Selsam, D., Lamm, M., B{\"u}nz, B., Liang, P., de~Moura, L., and Dill, D.~L.
\newblock Learning a sat solver from single-bit supervision.
\newblock \emph{arXiv preprint arXiv:1802.03685}, 2018.

\bibitem[Smith(1999)]{smith1999neural}
Smith, K.~A.
\newblock Neural networks for combinatorial optimization: a review of more than
  a decade of research.
\newblock \emph{INFORMS Journal on Computing}, 11\penalty0 (1):\penalty0
  15--34, 1999.

\bibitem[Sun et~al.(2020)Sun, Chen, Li, and Song]{sun2020improving}
Sun, H., Chen, W., Li, H., and Song, L.
\newblock Improving learning to branch via reinforcement learning.
\newblock 2020.

\bibitem[Tang et~al.(2017)Tang, Houthooft, Foote, Stooke, Chen, Duan, Schulman,
  De~Turck, and Abbeel]{tang2017exploration}
Tang, H., Houthooft, R., Foote, D., Stooke, A., Chen, X., Duan, Y., Schulman,
  J., De~Turck, F., and Abbeel, P.
\newblock \# exploration: A study of count-based exploration for deep
  reinforcement learning.
\newblock In \emph{31st Conference on Neural Information Processing Systems
  (NIPS)}, volume~30, pp.\  1--18, 2017.

\bibitem[Toenshoff et~al.(2021)Toenshoff, Ritzert, Wolf, and
  Grohe]{toenshoff2021graph}
Toenshoff, J., Ritzert, M., Wolf, H., and Grohe, M.
\newblock Graph neural networks for maximum constraint satisfaction.
\newblock \emph{Frontiers in artificial intelligence}, 3:\penalty0 98, 2021.

\bibitem[Traud et~al.(2012)Traud, Mucha, and Porter]{traud2012social}
Traud, A.~L., Mucha, P.~J., and Porter, M.~A.
\newblock Social structure of facebook networks.
\newblock \emph{Physica A: Statistical Mechanics and its Applications},
  391\penalty0 (16):\penalty0 4165--4180, 2012.

\bibitem[Veldt et~al.(2016)Veldt, Gleich, and Mahoney]{veldt2016simple}
Veldt, N., Gleich, D., and Mahoney, M.
\newblock A simple and strongly-local flow-based method for cut improvement.
\newblock In \emph{International Conference on Machine Learning}, pp.\
  1938--1947. PMLR, 2016.

\bibitem[Veli{\v{c}}kovi{\'c} et~al.(2017)Veli{\v{c}}kovi{\'c}, Cucurull,
  Casanova, Romero, Lio, and Bengio]{velivckovic2017graph}
Veli{\v{c}}kovi{\'c}, P., Cucurull, G., Casanova, A., Romero, A., Lio, P., and
  Bengio, Y.
\newblock Graph attention networks.
\newblock \emph{arXiv preprint arXiv:1710.10903}, 2017.

\bibitem[Vinyals et~al.(2015)Vinyals, Fortunato, and
  Jaitly]{vinyals2015pointer}
Vinyals, O., Fortunato, M., and Jaitly, N.
\newblock Pointer networks.
\newblock \emph{arXiv preprint arXiv:1506.03134}, 2015.

\bibitem[Wang et~al.(2017)Wang, Fountoulakis, Henzinger, Mahoney, and
  Rao]{wang2017capacity}
Wang, D., Fountoulakis, K., Henzinger, M., Mahoney, M.~W., and Rao, S.
\newblock Capacity releasing diffusion for speed and locality.
\newblock In \emph{International Conference on Machine Learning}, pp.\
  3598--3607. PMLR, 2017.

\bibitem[Wang et~al.(2019)Wang, Yan, and Yang]{wang2019learning}
Wang, R., Yan, J., and Yang, X.
\newblock Learning combinatorial embedding networks for deep graph matching.
\newblock In \emph{Proceedings of the IEEE/CVF International Conference on
  Computer Vision}, pp.\  3056--3065, 2019.

\bibitem[Wang et~al.(2020)Wang, Yan, and Yang]{wang2020combinatorial}
Wang, R., Yan, J., and Yang, X.
\newblock Combinatorial learning of robust deep graph matching: an embedding
  based approach.
\newblock \emph{IEEE Transactions on Pattern Analysis and Machine
  Intelligence}, 2020.

\bibitem[Welling \& Teh(2011)Welling and Teh]{welling2011bayesian}
Welling, M. and Teh, Y.~W.
\newblock Bayesian learning via stochastic gradient langevin dynamics.
\newblock In \emph{Proceedings of the 28th international conference on machine
  learning (ICML-11)}, pp.\  681--688. Citeseer, 2011.

\bibitem[Wolsey \& Nemhauser(1999)Wolsey and Nemhauser]{wolsey1999integer}
Wolsey, L.~A. and Nemhauser, G.~L.
\newblock \emph{Integer and combinatorial optimization}, volume~55.
\newblock John Wiley \& Sons, 1999.

\bibitem[Xu et~al.(2007)Xu, Boussemart, Hemery, and Lecoutre]{xu2007random}
Xu, K., Boussemart, F., Hemery, F., and Lecoutre, C.
\newblock Random constraint satisfaction: Easy generation of hard (satisfiable)
  instances.
\newblock \emph{Artificial intelligence}, 171\penalty0 (8-9):\penalty0
  514--534, 2007.

\bibitem[Xu et~al.(2018)Xu, Hu, Leskovec, and Jegelka]{xu2018powerful}
Xu, K., Hu, W., Leskovec, J., and Jegelka, S.
\newblock How powerful are graph neural networks?
\newblock \emph{arXiv preprint arXiv:1810.00826}, 2018.

\bibitem[Yan et~al.(2008)Yan, Cheng, Han, and Yu]{yan2008mining}
Yan, X., Cheng, H., Han, J., and Yu, P.~S.
\newblock Mining significant graph patterns by leap search.
\newblock In \emph{Proceedings of the 2008 ACM SIGMOD international conference
  on Management of data}, pp.\  433--444, 2008.

\bibitem[Yao et~al.(2019)Yao, Bandeira, and Villar]{yao2019experimental}
Yao, W., Bandeira, A.~S., and Villar, S.
\newblock Experimental performance of graph neural networks on random instances
  of max-cut.
\newblock In \emph{Wavelets and Sparsity XVIII}, volume 11138, pp.\  111380S.
  International Society for Optics and Photonics, 2019.

\bibitem[Yehuda et~al.(2020)Yehuda, Gabel, and Schuster]{yehuda2020s}
Yehuda, G., Gabel, M., and Schuster, A.
\newblock It’s not what machines can learn, it’s what we cannot teach.
\newblock In \emph{International Conference on Machine Learning}, pp.\
  10831--10841. PMLR, 2020.

\bibitem[Yolcu \& P{\'o}czos(2019)Yolcu and P{\'o}czos]{yolcu2019learning}
Yolcu, E. and P{\'o}czos, B.
\newblock Learning local search heuristics for boolean satisfiability.
\newblock In \emph{NeurIPS}, pp.\  7990--8001, 2019.

\end{thebibliography}
\bibliographystyle{icml2022}

\appendix

\section{Complete Proof}
\label{app:proof}
\subsection{Maximum Independent Set}
In maximum independent set, we use the energy function:
\begin{equation}
    f(x) := -\sum_{i=1}^n w_i x_i + \sum_{(i, j) \in E} \beta_{ij} x_i x_j
\end{equation}
We are going to prove the following proposition.
\begin{proposition}
If $\beta_{ij} \ge \min\{w_i, w_j\}$ for all $(i, j) \in E$, then for any $x \in \{0, 1\}^n$, there exists a $x' \in \{0, 1\}^n$ that satisfies the constraints in \eqref{eq:is} and has lower energy: $f(x') \le f(x)$.
\end{proposition}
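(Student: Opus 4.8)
The plan is to prove this by a \emph{repair} (local-search) argument: starting from an arbitrary $x$, I repeatedly flip a single carefully chosen coordinate from $1$ to $0$ until all constraints of \eqref{eq:is} are satisfied, and I show that each such flip does not increase $f$. Formally I would induct on $\lVert x\rVert_1$, the number of coordinates of $x$ equal to $1$ (equivalently, one could induct on the number of violated edges). Throughout I use that the node weights are nonnegative, as is standard for the maximum‑weight independent set problem.

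For the base case, if $x$ already satisfies $x_i x_j = 0$ for every $(i,j)\in E$ then it is feasible and we take $x' = x$; in particular this handles $x = \mathbf 0$. For the inductive step, suppose some $(i,j)\in E$ has $x_i = x_j = 1$, and without loss of generality $w_i = \min\{w_i, w_j\}$. Define $\tilde x$ by $\tilde x_k = x_k$ for $k\neq i$ and $\tilde x_i = 0$. The only terms of $f$ that change are the linear term $-w_i x_i$ and the quadratic terms $\beta_{ik} x_i x_k$ over $k\in N(i)$, so
\[
 f(\tilde x) - f(x) \;=\; w_i \;-\; \sum_{k\in N(i):\, x_k = 1} \beta_{ik}.
\]
Since $j\in N(i)$ with $x_j = 1$, the sum on the right is at least $\beta_{ij}$, and each $\beta_{ik}\ge 0$, hence $f(\tilde x) - f(x) \le w_i - \beta_{ij} \le w_i - \min\{w_i,w_j\} = 0$ by hypothesis. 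Moreover $\lVert\tilde x\rVert_1 = \lVert x\rVert_1 - 1$, so the induction hypothesis applied to $\tilde x$ yields a feasible $x'$ with $f(x')\le f(\tilde x)\le f(x)$, completing the induction. Since each step strictly decreases $\lVert x\rVert_1$, the procedure terminates after at most $n$ flips.

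I do not expect a serious obstacle here: this is an elementary discrete argument. The one point that genuinely matters is the choice of \emph{which} endpoint of a violated edge to zero out — zeroing the smaller‑weight endpoint is exactly what makes the single guaranteed decrease $\beta_{ij}$ outweigh the single increase $w_i$; zeroing the other endpoint would instead require the stronger condition $\beta_{ij}\ge\max\{w_i,w_j\}$ from Proposition~\ref{prop:mis} in the main text. A secondary detail to note explicitly is that flipping a coordinate to $0$ can never create a new violated edge, so the "number of violated edges'' is monotonically nonincreasing and the repair process is well defined.
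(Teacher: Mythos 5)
Your proposal is correct and follows essentially the same route as the paper's own proof: repair a violated edge by zeroing out its lighter-weight endpoint and observe that the guaranteed drop of $\beta_{ij}\ge\min\{w_i,w_j\}=w_i$ cancels the gain of $w_i$, with the remaining quadratic terms only helping. If anything, your version is slightly more careful than the paper's, since you make the induction on $\lVert x\rVert_1$ explicit (the paper performs a single flip and leaves the iteration to feasibility implicit) and you bound the sum via the single term $\beta_{ij}$ plus nonnegativity of the other $\beta_{ik}$, rather than the paper's intermediate inequality which, as written, would require $\beta_{kj}\ge w_k$ for every neighbor.
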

\begin{proof}
For arbitrary $x \in \{0, 1\}^n$, if $x$ satisfies all constraints, we only need to let $x' = x$. Else, there must exist an edge $(i, j) \in E$, such that $x_i x_j = 1$. Denote $k = \argmin\{w_i, w_j\}$, we define $x'_i = x_i$ if $i \neq k$ and $x'_k = 0$. In this case, we have:
\begin{equation}
    f(x') - f(x) = w_k - \sum_{j \in N(k)} \beta_{k,j} x_j \le w_k (1 - \sum_{j \in N(k)} x_j) \le 0
\end{equation}
Thus we show $f(x') \le f(x)$. 

On the other side, consider a graph $G = (V=\{1, 2\}, E=\{(1, 2)\})$ and $\beta_{12} < w_1 < w_2$. Then the maximum independent set is $\{2\}$, which can be represented by $x=(0, 1)$. However, in this case, let $x'=(1, 1)$ is feasible while $f(x') \le f(x)$. This means the condition we just derived is sharp.
\end{proof}

\subsection{Maximum Clique}
In maximum independent set, we use the energy function:
\begin{equation}
    f(x) := - \sum_{i=1}^n w_i x_i + \sum_{(i, j) \in E^c} \beta_{ij} x_i x_j
\end{equation}
We are going to prove the following proposition.
\begin{proposition}
If $\beta_{ij} \ge \min\{w_i, w_j\}$ for all $(i, j) \in E^c$, then for any $x \in \{0, 1\}^n$, there exists a $x' \in \{0, 1\}^n$ that satisfies the constraints in \eqref{eq:mc} and has lower energy: $f(x') \le f(x)$.
\end{proposition}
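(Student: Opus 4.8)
The plan is to observe that the maximum clique problem on $G=(V,E,w)$ is literally the maximum independent set problem on the complement graph $G^c=(V,E^c,w)$ with the same vertex weights, and that the energy function \eqref{eq:f_mc} is exactly the energy function \eqref{eq:f_is} instantiated on $G^c$. Consequently, the cleanest route is to invoke Proposition \ref{prop:mis} applied to $G^c$, which yields the claim verbatim. Since the underlying argument is short, I would also give it self-contained, mirroring the independent-set proof.

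First, for any $x\in\{0,1\}^n$ that already satisfies the constraints in \eqref{eq:mc}, take $x'=x$ and there is nothing to do. Otherwise there is a complement edge $(i,j)\in E^c$ with $x_ix_j=1$. Among all vertices $k$ with $x_k=1$ that are incident in $G^c$ to at least one other active vertex, I would pick one of minimum weight $w_k$, set $x'_k=0$, and leave every other coordinate unchanged. Flipping $x_k$ from $1$ to $0$ changes the objective by $+w_k$ and deletes exactly the penalty terms $\beta_{k,\ell}x_kx_\ell$ over $\ell\in N^c(k)$, so
\begin{equation}
    f(x')-f(x) = w_k - \sum_{\ell\in N^c(k)} \beta_{k,\ell}\,x_\ell .
\end{equation}
There is at least one $\ell^\star\in N^c(k)$ with $x_{\ell^\star}=1$ (a witness of the violation), and by minimality of $w_k$ we have $w_{\ell^\star}\ge w_k$, hence $\beta_{k,\ell^\star}\ge\min\{w_k,w_{\ell^\star}\}=w_k$; since all remaining terms of the sum are nonnegative, $f(x')-f(x)\le w_k-\beta_{k,\ell^\star}\le 0$.

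Iterating this flip strictly decreases the number of active vertices (equivalently, the number of violated complement edges), so after finitely many steps we reach a feasible $x'$ with $f(x')\le f(x)$, proving the proposition. The one point that needs care — the "main obstacle," though a mild one — is the choice of which vertex to zero out: an arbitrary endpoint of a violated edge does not suffice, because a conflicting neighbor of much smaller weight could have a penalty coefficient below $w_k$; choosing the globally lightest active-and-conflicting vertex guarantees that the single witnessing penalty term already dominates $w_k$. Finally, to show the condition $\beta_{ij}\ge\min\{w_i,w_j\}$ is sharp I would reuse the two-vertex construction: let $V=\{1,2\}$, $E=\emptyset$ so $E^c=\{(1,2)\}$, and suppose $\beta_{12}<w_1<w_2$; the optimal clique is $\{2\}$, yet $x'=(1,1)$ is infeasible with $f(x')=-w_1-w_2+\beta_{12}<-w_2=f((0,1))$, so no feasible point has energy $\le f(x')$.
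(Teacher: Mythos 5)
Your proof takes essentially the same route as the paper's: zero out the lighter of two conflicting active vertices and bound the energy change by the single witnessing penalty term $\beta_{k,\ell^\star}\ge\min\{w_k,w_{\ell^\star}\}=w_k$, with the same two-vertex sharpness example. If anything, yours is slightly more careful than the paper's version, which performs only a single flip and writes the intermediate bound as $w_k\bigl(1-\sum_{j:(k,j)\in E^c}x_j\bigr)\le 0$ (implicitly requiring $\beta_{k,j}\ge w_k$ for \emph{every} active complement-neighbor $j$, not just the witness of the violated edge), whereas you iterate explicitly until feasibility and isolate the one term that actually needs to dominate $w_k$.
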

\begin{proof}
For arbitrary $x \in \{0, 1\}^n$, if $x$ satisfies all constraints, we only need to let $x' = x$. Else, there must exist an edge $(i, j) \in E^c$, such that $x_i x_j = 1$. Denote $k = \argmin\{w_i, w_j\}$, we define $x'_i = x_i$ if $i \neq k$ and $x'_k = 0$. In this case, we have:
\begin{equation}
    f(x') - f(x) = w_k - \sum_{j: (k, j) \in E^c} \beta_{k,j} x_j \le w_k (1 - \sum_{j: (k, j) \in E^c} x_j) \le 0
\end{equation}
Thus we show $f(x') \le f(x)$. 

On the other side, consider a graph $G = (V=\{1, 2\}, E=\{\empty\})$ and $\beta_{12} < w_1 < w_2$. Then the maximum clique is $\{2\}$, which can be represented by $x=(0, 1)$. However, in this case, let $x'=(1, 1)$ is feasible while $f(x') \le f(x)$. This means the condition we just derived is sharp.
\end{proof}

\subsection{Minimum Dominate Set}
In maximum independent set, we use the energy function:
\begin{equation}
    f(x):= \sum_{i=1}^n w_i x_i + \sum_{i=1}^n \beta_i (1-x_i) \prod_{j \in N(i)} (1 - x_j)
\end{equation}
We are going to prove the following proposition.
\begin{proposition}
If $\beta_i \ge \min_k \{ w_k: k \in N(i) \text{ or } k = i\}$, then for any $x \in \{0, 1\}^n$, there exists a $x' \in \{0, 1\}^n$ that satisfies the constraints in \eqref{eq:mc} and has lower energy: $f(x') \le f(x)$.
\end{proposition}
\begin{proof}
For arbitrary $x \in \{0, 1\}^n$, if $x$ satisfies all constraints, we only need to let $x' = x$. Else, there must exist a node $t \in V$, such that $x_t = 0$ and $x_j = 0$ for all $j \in N(t)$. Let $k = \argmin\{w_j: j \in N(t), \text{ or } j = t\}$, we define $x'_i = x_i$ if $i \neq k$ and $x'_k = 1$. In this case, we have:
\begin{equation}
    f(x') - f(x) = w_k - \beta_t + \sum_{i\neq t} \beta_i \Big[(1-x'_i) \prod_{j \in N(i)} (1 - x'_j) - (1-x_i) \prod_{j \in N(i)} (1 - x_j)\Big] \le 0
\end{equation}
Thus, we prove $f(x') \le f(x)$. 

On the other side, consider a graph $G = (V=\{1\}, E=\{\empty\})$ and $\beta_{1} < w_1$. Then the maximum clique is $\{1\}$, which can be represented by $x=(1)$. However, in this case, let $x'=(0)$ is feasible while $f(x') \le f(x)$. This means the condition we just derived is sharp.
\end{proof}

\subsection{Minimum Cut}
In maximum independent set, we use the energy function:
\begin{equation}
    f(x) := \sum_{(i, j)\in E} x_i (1 - x_j) w_{ij} + \beta (\sum_{i=1}^n d_i x_i - D_1)_+ + \beta (D_0 - \sum_{i=1}^n d_i x_i )_+
\end{equation}
We are going to prove the following proposition.
\begin{proposition}
If $\beta \ge \max_i\{\sum_{j \in N(i)} |w_{i, j}| \}$, then any $x \in \{0, 1\}^n$, there exists a $x' \in \{0, 1\}^n$ that satisfies the constraints in \eqref{eq:mc} and has lower energy: $f(x') \le f(x)$.
\end{proposition}


\section{Experiment Details}
\label{app:experiment}
\subsection{Hardware}
All methods were run on Intel(R) Xeon(R) Gold 5215 CPU @ 2.50GHz, with 377GB of available RAM. The neural networks were executed on a single RTX6000 25GB graphics card. The code was executed on version 1.9.0 of PyTorch and version 1.7.2 of PyTorch Geometric.

\end{document}